\DeclareMathOperator*{\argmin}{\mathrm{arg\,min}}
\DeclareMathOperator*{\argmax}{\mathrm{arg\,max}}
\newtheorem{theorem}{Theorem}[section]
\newtheorem{lemma}[theorem]{Lemma}
\begin{document}

\begin{center}
 {\Large Deep Neural Network Training with Frank--Wolfe}
\end{center}

\vspace{7mm}

\noindent\textbf{Sebastian Pokutta}\hfill\href{mailto:pokutta@zib.de}{\ttfamily pokutta@zib.de}\\
\emph{\small AI in Society, Science, and Technology \& Institute of Mathematics\\
Zuse Institute Berlin \& Technische Universit\"at Berlin \\
Berlin, Germany}\\
\\
\textbf{Christoph Spiegel}\hfill\href{mailto:spiegel@zib.de}{\ttfamily spiegel@zib.de}\\
\emph{\small AI in Society, Science, and Technology\\
Zuse Institute Berlin\\
Berlin, Germany}\\
\\
\textbf{Max Zimmer}\hfill\href{mailto:zimmer@zib.de}{\ttfamily zimmer@zib.de}\\
\emph{\small AI in Society, Science, and Technology\\
Zuse Institute Berlin\\
Berlin, Germany}\\
\\

\vspace{5mm}

\begin{center}
\begin{minipage}{0.85\textwidth}
\begin{center}
 \textbf{Abstract}
\end{center}
 {\small This paper studies the empirical efficacy and benefits of using projection-free first-order methods in the form of Conditional Gradients, a.k.a. Frank--Wolfe methods, for training Neural Networks with constrained parameters. We draw comparisons both to current state-of-the-art stochastic Gradient Descent methods as well as across different variants of stochastic Conditional Gradients.
	 
	In particular, we show the general feasibility of training Neural Networks whose parameters  are constrained by a convex feasible region using Frank--Wolfe algorithms and compare different stochastic variants. We then show that, by choosing an appropriate region, one can achieve performance exceeding that of unconstrained stochastic Gradient Descent and matching state-of-the-art results relying on $L^2$-regularization. Lastly, we also demonstrate that, besides impacting performance, the particular choice of constraints can have a drastic impact on the learned representations.}
\end{minipage}
\end{center}

\vspace{0mm}

\section{Introduction} \label{sec:introduction}

Despite its simplicity, stochastic Gradient Descent (SGD) is still the method of choice for training Neural Networks. A common assumption here is that the parameter space in which the weights $\theta$ of these networks lie is unconstrained. The standard SGD update can therefore simply be stated as
\begin{equation}
	\theta_{t+1} = \theta_t - \alpha \tilde{\nabla} L(\theta_t),
\end{equation}
where $L$ is some loss function to be minimized, $\tilde{\nabla} L(\theta_t)$ is the $t$-th batch or stochastic gradient, and $\alpha \geq 0$ the learning rate. In search of improved methods, the most notable modifications of this principle that have been proposed consist of adding momentum, see for example \cite{Qian1999} and \cite{Nesterov1983}, or of automatically adapting the learning rate on a per-parameter level, as is for example done by \cite{DuchiHazanSinger2011} and \cite{KingmaBa2014}, see also \cite{SchmidtSchneiderHennig2020} for a recent large-scale comparison. It has however been suggested that adaptive methods, despite initial speed ups, do not generalize as well as standard stochastic Gradient Descent in a wide variety of deep learning task, see \cite{WilsonRoelofsSternSrebroRecht2017}. In fact, for obtaining state-of-the-art test set performance on image classification datasets such as CIFAR-10 and ImageNet, a more significant contribution comes in the form of weight decay, see for example \cite{HansonPratt1989}. This regularization technique consists of modifying the weight update as
\begin{equation}
	\theta_{t+1} = (1-\lambda) \, \theta_t - \alpha \tilde{\nabla} L(\theta_t),
\end{equation}
where $\lambda \in [0,1]$ defines the rate of the weight decay. For standard SGD this is equivalent to adding an $L^2$-regularization term to the loss function $L$, see for example \cite{LoshchilovHutter2017}.

Motivated by the utility of weight regularization, we explore the efficacy of constraining the parameter space of Neural Networks to a suitable convex and compact region ${\mathcal C}$. The previously introduced methods would require a projection step during each update to maintain the feasibility of the parameters in this constrained setting. The standard SGD update would therefore become
\begin{equation}
	\theta_{t+1} = \Pi_{\mathcal C} \big( \theta_t - \alpha \tilde{\nabla} L(\theta_t) \big),
\end{equation}
where the projection function $\Pi_{\mathcal C}$ maps the input to its closest neighbor in the given feasible region ${\mathcal C}$ as measured by the $L^2$-norm. Depending on the particular feasible region, each projection step can be very costly, as often no closed expression is known and a separate optimization problem needs to be solved. We will instead explore a more appropriate alternative in the form of the Frank--Wolfe algorithm \citep{FrankWolfe1956}, also referred to as the Conditional Gradient algorithm \citep{LevitinPolyak1966}, a simple projection-free first-order algorithm for constrained optimization. In particular, we will be interested in applying stochastic variants of this algorithm. Rather than relying on a projection step like Gradient Descent methods, Frank--Wolfe algorithms instead call a linear minimization oracle (LMO) to determine
\begin{equation} \label{eq:LMO}
	v_t = \argmin_{v \in \mathcal C} \langle \tilde{\nabla} L(\theta_t), v \rangle,
\end{equation}
and move in the direction of $v_t$ through the update
\begin{equation} \label{eq:FWstep}
	\theta_{t+1} = \theta_t + \alpha ( v_t - \theta_t),
\end{equation}
where $\alpha \in [0,1]$. Feasibility is maintained, assuming that $\theta_0$ was initialized to lie in $\mathcal C$, since the update step consists of determining the convex combination of two points in the convex feasible region. Effectively, Frank--Wolfe algorithms minimize a linear first-order approximation of the loss function $L$ over the feasible region $\mathcal C$ and update the parameters by moving them closer towards the result of that linear optimization problem. These algorithms are therefore able to avoid the projection step by relying on an often computationally much cheaper LMO over $\mathcal C$.

We will demonstrate that Frank--Wolfe algorithms are viable candidates for training Neural Networks with constrained weights and that, in combination with an appropriately chosen region, the resulting networks achieve state-of-the-art test accuracy. We will also discuss the possibility of using specific constraints to achieve particular effects in the parameters of the networks, such as training sparse networks by using feasible regions spanned by sparse vectors. Lastly, we will show that different algorithms and feasible regions impact training and generalization behavior.

\paragraph{Related Work.} Frank--Wolfe algorithms have been well studied in the setting of smooth convex functions. Here \cite{HazanLuo2016} showed that the standard stochastic Frank--Wolfe algorithm (SFW) converges with a rate of $\mathcal{O}(1/t)$, assuming that the batch-sizes grows like $\Theta(t^2)$. Many variants have been proposed to improve the practical efficiency of SFW, most of these rely on modifying how the (unbiased) gradient estimator $\tilde{\nabla} L(\theta_t)$ is obtained: the Stochastic Variance-Reduced Frank--Wolfe algorithm (SVRF) \citep{HazanLuo2016} integrates variance reduction based on \cite{JohnsonZhang2013}, the Stochastic Path-Integrated Differential EstimatoR Frank--Wolfe algorithm (SPIDER-FW) \citep{YurtseverSraCevher2019, ShenFangZhaoHuangQian2019} integrates a different type of variance reduction based on \cite{FangLiChrisLinZhang2018} and the Online stochastic Recursive Gradient-based Frank--Wolfe algorithm (ORGFW) \citep{XieShenZhangQianWang2019} uses a form of momentum inspired by \cite{CutkoskyOrabona2019}. Related to these modifications of the gradient estimator, adding a momentum term to SFW has also been considered in several different settings and under many different names \citep{MokhtariHassaniKarbasi2020, MokhtariHassaniKarbasi2018, ChenHarshawhassaniKarbasi2018}. We will simply refer to this approach as SFW with momentum. For further related work regarding stochastic Frank--Wolfe methods, see \cite{LanZhou2016, LanPokuttaZhouZink2017, GoldfarbIyengarZhou2017, NegiarDresdnerTsaiGhaouiLocatelloPedregos2020, ZhangShenMokhtariHassaniKarbasi2020, CombettesPokuttaSpiegel2020}. So far however, little has been done to determine the \emph{practical real-world implications} of using stochastic Frank--Wolfe methods for the training of deep Neural Networks. One exception are the computational results by \cite{XieShenZhangQianWang2019}, which however are limited to fully connected Neural Networks with only a single hidden layer and therefore closer to more traditional setups. 

\paragraph{Contributions.} Our contribution is an inquiry into using projection-free methods for training Neural Networks and can be summarized as follows.

\medskip\noindent\emph{Achieving state-of-the-art test performance.} We demonstrate that stochastic Frank--Wolfe methods can achieve state-of-the-art test accuracy results on several well-studied benchmark datasets,  namely CIFAR-10, CIFAR-100, and ImageNet.

\medskip\noindent\emph{Constrained training affects learned features.} We show that the chosen feasible region significantly affects the encoding of information into the networks both through a simple visualization and by studying the number of active weights of networks trained on MNIST with various types of constraints.

\medskip\noindent\emph{Comparison of stochastic variants.} We compare different stochastic Frank--Wolfe algorithms and show that the standard SFW algorithm, as well as a variant that adds momentum, are both the most practical and best performing versions for training Neural Networks in terms of their generalization performance.

\paragraph{Outline.} We start by summarizing the necessary theoretical preliminaries regarding stochastic Frank--Wolfe algorithms as well as several candidates for feasible regions in Section~\ref{sec:preliminaries}. In Section~\ref{sec:technical-considerations} we will cover relevant technical considerations when constraining the parameters of Neural Networks and using stochastic Frank--Wolfe algorithms for training. Finally, in Section~\ref{sec:computational-results} we provide computational results. We conclude the paper with some final remarks in Section~\ref{sec:final-remarks}. Due to space limitations, all proofs and extended computational results have been relegated to the Appendix.

% PRELIMINARIES
\section{Preliminaries} \label{sec:preliminaries}

We work in $(\mathbb{R}^n,\langle\cdot,\cdot\rangle)$, that is the Euclidean space with the standard inner product. We denote the $i$--th standard basis vector in $\mathbb R^n$ by $e_i$. 
%For a given Neural Network consisting of $\ell$ layers (including the output layer), we will denote by $w_i \in \mathbb R^{n_i}$ and $b_i \in \mathbb R^{n'_i}$ the weight and bias parameters of layer $1 \leq i \leq \ell$ written in vector form.
The feasible regions $\mathcal{C}\subset\mathbb{R}^n$ we are interested in will be compact convex sets. For all $p\in\left[1,+\infty\right]$, let $\|\cdot\|_p$ denote the usual $L^p$-norm and $D = D(\mathcal C)\coloneqq\max_{x,y\in\mathcal{C}}\|y-x\|_2$ the $L^2$-diameter of $\mathcal{C}$. For every $i,j\in\mathbb{N}$ the double brackets $\llbracket i,j\rrbracket$ denote the set of integers between and including $i$ and $j$, assuming that $i\leq j$. For all $x\in\mathbb{R}^n$ and $i\in\llbracket1,n\rrbracket$, $[x]_i$ denotes the $i$-th entry of $x$.

% \subsection{Notation and definitions}

% \subsection{Projected Gradient Descent}

% \subsection{Conditional Gradient Descent}

\subsection{Stochastic Frank--Wolfe algorithms}
\label{sec:sfw}

We consider the constrained finite-sum optimization problem
	\begin{equation} \label{eq:finit-sum-problem}
		\min_{\theta \in \mathcal{C}} L(\theta) = \min_{\theta \in \mathcal{C}} \frac{1}{m} \sum_{i=1}^m \ell_i(\theta),
	\end{equation}
where the $\ell_i$ and therefore $L$ are differentiable in $\theta$ but possibly non-convex. Problems like this are at the center of Machine Learning, assuming the usual conventions regarding ReLU activation functions. We will denote its globally optimal solution by $\theta^\star$.

The pseudo-code of the standard stochastic Frank--Wolfe algorithm for this problem is stated in Algorithm~\ref{alg:sfw}. The random sample in Line~\ref{line:batch_sample} ensures that $\tilde{\nabla}L(\theta_t)$ in Line~\ref{line:gradient} is an unbiased estimator of $\nabla L(\theta_t)$, that is $\mathbb E \tilde{\nabla}L(\theta_t) = \nabla L(\theta_t)$. We have also included the option of applying a momentum term to the gradient estimate in Line~\ref{line:momentum} as is often done for SGD in Machine Learning. The algorithm also assumes access to a linear optimization oracle over the feasible region $\mathcal{C}$ that allows one to efficiently determine $\argmin_{v\in\mathcal{C}}\,\langle\tilde{\nabla}L(\theta_t),v\rangle$ in Line~\ref{line:lmo}. The update by convex combination in Line~\ref{line:update} ensures that $\theta_{t+1}\in\mathcal{C}$.

\begin{algorithm}[h]
\caption{Stochastic Frank--Wolfe (SFW)}
\label{alg:sfw}
\textbf{Input:} Initial parameters $\theta_0\in\mathcal{C}$, learning rate $\alpha_t \in \left[0,1\right]$, momentum $\rho_t \in \left[0,1\right]$, batch size $b_t \in \llbracket1, m \rrbracket$, number of steps $T$.\\
\vspace{-4mm}
\begin{algorithmic}[1]
\STATE $m_0 \leftarrow 0$ \label{line:momentum_init}
\FOR{$t=0$ \textbf{to} $T-1$}
\STATE uniformly sample i.i.d. $i_1, \ldots, i_{b_t}$ from $\llbracket 1, m \rrbracket$\label{line:batch_sample}
\STATE $\tilde{\nabla}L(\theta_t) \leftarrow \frac{1}{b_t} \sum_{j=1}^{b_t} \nabla \ell_{i_{j}} (\theta_t)$ \label{line:gradient}
\STATE $m_t \leftarrow (1-\rho_t) \, m_{t-1} + \rho_t \, \tilde{\nabla}L(\theta_t)$ \label{line:momentum}
\STATE$v_t \leftarrow \argmin_{v\in\mathcal{C}}\,\langle m_t,v\rangle$\label{line:lmo}
\STATE$\theta_{t+1} \leftarrow \theta_t+\alpha_t(v_t-\theta_t)$\label{line:update}
\ENDFOR
\end{algorithmic}
\end{algorithm}

\cite{ReddiSraPoczosBarnabasSmola2016} presented a convergence result for SFW (without the momentum term) in the non-convex setting. We denote the $L^2$-diameter of $\mathcal C$ by $D$ and we further define the \emph{Frank--Wolfe Gap} as
\begin{equation} \label{eq:fw-gap}
	\mathcal G(\theta) = \max_{v \in \mathcal C} \langle v-\theta, -\nabla L(\theta) \rangle.
\end{equation}
Note that $\mathcal G(\theta) = 0$ if and only if $\theta$ is a first order criticality, so $\mathcal G$ replaces the norm of the gradient as our metric for convergence in the constrained setting. A proof of the following statement will be included in Appendix~\ref{app:proof}.

\begin{theorem}[\cite{ReddiSraPoczosBarnabasSmola2016}] \label{thm:sfw_convergence}
	Consider the setting of Problem~\eqref{eq:finit-sum-problem} and assume that 
	%the $\ell_i$ are $G$-Lipschitz and 
	%$L$ is $M$-smooth
	the $\ell_i$ are smooth. If $\rho_t = 1$, $\alpha_t = T^{-1/2}$ and $b_t = T$ for all $0 \leq t < T$ and if $\theta_a$ is chosen uniformly at random from $\{\theta_i : 0 \leq i < T \}$ as determined by Algorithm~\ref{alg:sfw}, then we have 
	\begin{equation*}
		\mathbb E \, \mathcal G(\theta_a) = \mathcal O \left( \frac{L(\theta_0) - L(\theta^\star)}{T^{1/2}} \right), %\leq \frac{2^{3/2} (L(\theta_0) - L(\theta^\star)) + DG + \sqrt{2}MD^2}{T^{1/2}}.
	\end{equation*}
	where $\mathbb E$ denotes the expectation w.r.t. all the randomness present.
\end{theorem}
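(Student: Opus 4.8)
The plan is to run the classical descent-lemma analysis of Frank--Wolfe, carefully tracking the error introduced by calling the linear minimization oracle at the stochastic gradient $m_t$ rather than at the true gradient. First, since each $\ell_i$ is smooth, $L=\frac1m\sum_i\ell_i$ is $C$-smooth for some constant $C$, and since $\mathcal C$ is compact and the $\nabla\ell_i$ are continuous, there is a finite bound $G\coloneqq\max_i\max_{\theta\in\mathcal C}\|\nabla\ell_i(\theta)\|_2$; in particular, as the batch gradient averages $b_t$ i.i.d.\ unbiased samples, $\mathbb E\big[\|\tilde\nabla L(\theta_t)-\nabla L(\theta_t)\|_2^2\mid\theta_t\big]\le G^2/b_t$. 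With $\rho_t=1$ we have $m_t=\tilde\nabla L(\theta_t)$, and the update gives $\theta_{t+1}-\theta_t=\alpha_t(v_t-\theta_t)$ with $\|\theta_{t+1}-\theta_t\|_2\le\alpha_t D$. The smoothness inequality then yields
\begin{equation*}
	L(\theta_{t+1}) \le L(\theta_t) + \alpha_t\langle\nabla L(\theta_t),\,v_t-\theta_t\rangle + \tfrac{C}{2}\alpha_t^2 D^2 .
\end{equation*}

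The core step is to convert the linear term into (minus) the Frank--Wolfe gap up to a controllable error. Let $\hat v_t\in\argmin_{v\in\mathcal C}\langle\nabla L(\theta_t),v\rangle$ be the exact Frank--Wolfe vertex, so that $\langle\nabla L(\theta_t),\hat v_t-\theta_t\rangle=-\mathcal G(\theta_t)$. Because $v_t$ minimizes $\langle\tilde\nabla L(\theta_t),\cdot\rangle$ over $\mathcal C$, we have $\langle\tilde\nabla L(\theta_t),v_t-\theta_t\rangle\le\langle\tilde\nabla L(\theta_t),\hat v_t-\theta_t\rangle$; writing $\nabla L(\theta_t)=\tilde\nabla L(\theta_t)+(\nabla L(\theta_t)-\tilde\nabla L(\theta_t))$ inside the linear term and substituting this inequality gives, after the $\theta_t$-terms cancel,
\begin{equation*}
	\langle\nabla L(\theta_t),v_t-\theta_t\rangle \le -\mathcal G(\theta_t) + \langle\tilde\nabla L(\theta_t)-\nabla L(\theta_t),\,\hat v_t-v_t\rangle \le -\mathcal G(\theta_t) + D\,\|\tilde\nabla L(\theta_t)-\nabla L(\theta_t)\|_2 ,
\end{equation*}
the last bound by Cauchy--Schwarz and $\|\hat v_t-v_t\|_2\le D$. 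Combining with the display above and rearranging,
\begin{equation*}
	\alpha_t\,\mathcal G(\theta_t) \le L(\theta_t) - L(\theta_{t+1}) + \alpha_t D\,\|\tilde\nabla L(\theta_t)-\nabla L(\theta_t)\|_2 + \tfrac{C}{2}\alpha_t^2 D^2 .
\end{equation*}

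It remains to take expectations, sum, and telescope. The delicate point — and essentially the only obstacle — is that $\hat v_t-v_t$ depends on the random draw $\tilde\nabla L(\theta_t)$, so the middle term is not mean-zero and must genuinely be bounded rather than discarded; this is precisely why the large batch $b_t=T$ is needed. Conditioning on $\theta_t$ and applying Jensen's inequality to the variance bound gives $\mathbb E\big[\|\tilde\nabla L(\theta_t)-\nabla L(\theta_t)\|_2\mid\theta_t\big]\le G/\sqrt{b_t}=G/\sqrt{T}$. Taking total expectation with $\alpha_t=T^{-1/2}$ yields
\begin{equation*}
	T^{-1/2}\,\mathbb E\,\mathcal G(\theta_t) \le \mathbb E[L(\theta_t)] - \mathbb E[L(\theta_{t+1})] + \frac{GD}{T} + \frac{CD^2}{2T} .
\end{equation*}
Summing over $t=0,\dots,T-1$, the first two terms telescope to $L(\theta_0)-\mathbb E[L(\theta_T)]\le L(\theta_0)-L(\theta^\star)$ (using $\theta_T\in\mathcal C$), and the other two contribute the $T$-independent amount $GD+CD^2/2$. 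Dividing by $T$ and using that $\theta_a$ is uniform on $\{\theta_0,\dots,\theta_{T-1}\}$, so $\mathbb E\,\mathcal G(\theta_a)=\frac1T\sum_{t=0}^{T-1}\mathbb E\,\mathcal G(\theta_t)$, we obtain
\begin{equation*}
	\mathbb E\,\mathcal G(\theta_a) \le \frac{L(\theta_0)-L(\theta^\star) + GD + CD^2/2}{T^{1/2}} = \mathcal O\!\left(\frac{L(\theta_0)-L(\theta^\star)}{T^{1/2}}\right),
\end{equation*}
absorbing the constants $G$, $D$, and $C$ into the implicit $\mathcal O$-constant.
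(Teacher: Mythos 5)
Your proposal is correct and follows essentially the same route as the paper's Appendix~\ref{app:proof}: the smoothness (descent) inequality, the comparison of $v_t$ with the exact Frank--Wolfe vertex $\hat v_t$ so that the stochastic-gradient error enters only through $\langle \tilde\nabla L(\theta_t)-\nabla L(\theta_t),\hat v_t-v_t\rangle$, Cauchy--Schwarz with the diameter $D$, the $G/\sqrt{b_t}$ gradient-approximation bound, and a telescoping sum combined with the uniform choice of $\theta_a$. The only differences are cosmetic: you plug in $\alpha_t=T^{-1/2}$ directly instead of the paper's $\beta$-parametrized step size, and you sketch the variance lemma rather than citing it, which does not change the argument.
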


The main focus of this paper will be on using both SFW and its momentum version to train deep Neural Networks. They are both straight forward to implement, assuming access to an LMO, and the results in the later sections will demonstrate their efficacy in this setting. We will however also include comparisons to some previously mentioned variants of Algorithm~\ref{alg:sfw}, that is in particular SVRF, SPIDER-FW, and ORGFW. Their pseudocode is included in Appendix~\ref{app:further-codes} and for convergence statements we refer to their respective papers.

\subsection{Regularization via feasible regions}

By imposing constraints on the parametrization of the Neural Network, we aim to control the structure of its weights and biases. In this part we will introduce relevant regions, discuss the associated linear minimization oracles, and also, for completeness, state the $L^2$-projection methods $\Pi_{\mathcal C}$ where appropriate. Note that Euclidean projection is not always the correct choice for all Gradient Descent methods, in particular adaptive optimizers like Adagrad and Adam require a more complicated projection based on the norm associated with previously accumulated gradient information.

The actual effects and implications of constraining the parameters of a Neural Network with any of these particular regions will be discussed in the next section. Note that, if the convex region $\mathcal{C}$ is a polytope, that is it is spanned by a finite set of vertices, the output of the LMO can always assumed to be one of these vertices.
%We also ignore the possibility of entries being zero in the input $x$ of the LMO for now.
See also Section~\ref{sec:technical-considerations} for further remarks on both of these aspects. For a list containing further potential candidates for feasible regions, see for example \cite{Jaggi2013}.

\paragraph{$L^p$-norm ball.}

The $L^p$-norm ball $\mathcal{B}_p(\tau) = \{x \in \mathbb{R}^n : \|x\|_p \leq \tau\}$ is convex for any $p \in [1, +\infty]$ and radius $\tau > 0$. The $L^2$-diameter of the $L^1$-norm ball and the $L^\infty$-norm ball, more commonly referred to as the \emph{hypercube}, are respectively given by $D(\mathcal{B}_1(\tau)) = 2\tau$ and $D(\mathcal{B}_\infty(\tau)) = 2 \tau \sqrt{n}$. For general $p \in (1, +\infty)$ we have $D(\mathcal{B}_p(\tau)) = 2 \tau n^{1/2 - 1/p}$.

\noindent \emph{LMO.} When $p \in (1, +\infty)$, the LMO over $\mathcal{B}_p(\tau)$ is given by
\begin{equation}
	\argmin_{v \in \mathcal{B}_p(\tau)} \langle v,x \rangle = -\tau \, \textrm{sgn}(x) |x|^{q / p} / \|x\|_q^{q/p},
\end{equation}
where $q$ is the complementary order to $p$ fulfilling $\nicefrac{1}{p} + \nicefrac{1}{q}  = 1$. For $p = 1$ and $p = \infty$ the oracle is given by the respective limits of this expression, i.e.,
\begin{equation}
	\argmin_{v \in \mathcal{B}_\infty(\tau)} \langle v,x \rangle = -\tau \, \textrm{sgn}(x),
\end{equation}
and
\begin{equation}
	[\argmin_{v \in \mathcal{B}_1(\tau)} \langle v,x \rangle]_i = \begin{cases} -\tau \, \textrm{sgn}([x]_i)  &\mbox{if } i = \argmax(|x|), \\ 
	0 & \mbox{otherwise}, \end{cases}
\end{equation}
that is the vector with a single non-zero entry equal to $-\tau \, \textrm{sign}(x)$ at a point where $|x|$ takes its maximum. Note that $\argmin_{v \in \mathcal{B}_1(\tau)} \langle v,x \rangle$ has a unique solution only if the entries of $|x|$ have a unique maximum. % and $\argmin_{v \in \mathcal{B}_\infty(\tau)} \langle v,x \rangle$ has a unique solution only if the entries of $x$ are all non-zero.

\noindent \emph{Projection.} If $x$ already lies in the feasible region, that is $x \in \mathcal B_p(\tau)$, then clearly $\Pi_{\mathcal C} (x) = x$. For $x \in \mathbb R^n \setminus \mathcal{B}_2(\tau)$, its $L^2$-projection into $\mathcal{B}_2(\tau)$ is given by
\begin{equation}
	\Pi_{\mathcal{B}_2(\tau)} (x) = \argmin_{v \in \mathcal{B}_2(\tau)} \|v - x\|_2 = \tau \, x / \|x\|_2.
\end{equation}
The $L^2$-projection onto $\mathcal{B}_\infty(\tau)$ for some given $x \in \mathbb R^n \setminus \mathcal{B}_\infty(\tau)$ is given by clipping the individual entries of $x$ to lie in $[-\tau,\tau]$, that is
\begin{equation}
	[\argmin_{v \in \mathcal{B}_1(\tau)} \|v - x\|_2]_i = \max( \min([x]_i, \tau), -\tau),
\end{equation}
for all $i \in \llbracket 1, n \rrbracket$. There are also algorithms capable of exact $L^2$-projections into $\mathcal{B}_1(\tau)$, see for example \cite{DuchiSHalevSingerChandra2008} for an algorithm of complexity $\mathcal O (n)$, but for general $p$ the projection task, unlike the LMO, poses a non-trivial (sub)optimization problem of its own.

\paragraph{\(K\)-sparse polytope.} For a fixed integer $K \in \llbracket 1, n \rrbracket$, the \(K\)-sparse polytope of radius $\tau > 0$ is obtained as the intersection of the $L^1$-ball $\mathcal{B}_1(\tau K)$ and the hypercube $\mathcal{B}_\infty (\tau)$. Equivalently, it can be defined as the convex hull spanned by all vectors in $\mathbb{R}^n$ with exactly $K$ non-zero entries, each of which is either $-\tau$ or $+\tau$. For $K = 1$ one recovers the $L^1$-norm ball and for $K = n$ the hypercube. The $L^2$-diameter of the \(K\)-sparse polytope of radius $\tau$ is given by $2 \tau \sqrt{K}$ assuming $n \geq K$.

\noindent \emph{LMO.} A valid solution to Equation~\eqref{eq:LMO} for the $K$-sparse polytope is given by the vector with exactly $K$ non-zero entries at the coordinates where $|x|$ takes its $K$ largest values, each of which is equal to $-\tau \, \textrm{sign}(x)$.

\paragraph{$K$-norm ball.}  For a fixed integer $K \in \llbracket 1, n \rrbracket$, the \(K\)-norm ball of radius $\tau$ can be defined as the convex hull of the union of the $L^1$-norm ball $\mathcal B_1(\tau)$ and the hypercube $\mathcal B_\infty (\tau / K)$. The \(K\)-norm was introduced in \cite{Watson1992}. For $K = 1$ one recovers the hypercube and for $K = n$ the $L^1$-norm ball. It is also the norm ball induced by the $K$-norm which is defined as the sum of the largest $K$ absolute entries in a vector. Its $L^2$-diameter is given by $\max ( 2\tau, 2 \tau \sqrt{n} / K )$.

\noindent \emph{LMO.} A valid solution to Equation~\eqref{eq:LMO} for the $K$-norm ball is easily obtained by taking the minimum of the LMOs of the $L^1$-norm ball of radius $\tau$ and the hypercube of radius $\tau K$.

\paragraph{Unit simplex.} The $\tau$-scaled unit simplex is defined by $\{\theta : \theta_1 + \ldots + \theta_n \leq \tau, \theta_i \geq 0\}$.  It can also be seen as the $n$-dimensional simplex spanned by all scaled standard basis vectors $\tau e_i$ in $\mathbb R^n$ as well as the zero vector. Its $L^2$-diameter is $\tau \sqrt{2}$.

\noindent \emph{LMO.} A valid solution to Equation~\eqref{eq:LMO} for the probability simplex is given by $\tau e_{i_0}$ where $i_0 = \argmin |x_i|$ if $x_{i_0} < 0$ and the zero vector otherwise.

\noindent \emph{Projection.} See \cite{ChenYe2011}.

\paragraph{Probability simplex.} The $\tau$-scaled probability simplex is defined as $\{x : x_1 + \ldots + x_n = \tau, x_i \geq 0\}$, that is all probability vectors in $\mathbb R^n$ multiplied by a factor of $\tau$. It can equivalently as the $(n-1)$-dimensional simplex spanned by all vectors $\tau e_i$ in $\mathbb R^n$. The $L^2$-diameter of the probability simplex is $\tau \sqrt{2}$.

\noindent \emph{LMO.} A valid solution to Equation~\eqref{eq:LMO} for the probability simplex is given by $\tau e_{i_0}$ where $i_0 = \argmin x_i$.

\noindent \emph{Projection.} %An algorithm of complexity  $\mathcal O (n \log n)$ can be found in
See~\cite{WangCarreira2013}.

\paragraph{Permutahedron.} The permutahedron is the $(n-1)$-dimensional polytope spanned by all permutations of the coordinates of the vector $(1,2,\ldots,n)$. Its $L^2$-diameter is given by $( 2 k (k+1) (2k+1) / 6 )^{1/2}$ where $k = \lfloor n/2 \rfloor$.

\noindent \emph{LMO.} A solution to Equation~\eqref{eq:LMO} for the permutahedron can be obtained in polynomial time through the Hungarian method.

\noindent \emph{Projection.} See \cite{YasutakeHatanoKoheiKijimaTakimotoTakeda2011} and \cite{LimWright2016}.

%%% SEC: TECHNICAL CONSIDERATIONS

\section{Technical Considerations} \label{sec:technical-considerations}

%This section covers some of the technical considerations that are relevant when using Frank--Wolfe algorithms to train constrained Neural Networks. We start with some remarks regarding the implementation of the algorithms presented in the previous sections and then address suitable ways of constraining the parameters in Neural Networks.

\subsection{Frank--Wolfe algorithms}

In the previous sections we referenced several stochastic variants of the Frank--Wolfe algorithm, namely SFW with and without momentum, SVRF, SPIDER-FW and ORGFW. As previously already stated, assuming the existence of an LMO, SFW is straight-forward to implement, both with and without momentum. Implementing SVRF, SPIDER-FW, and ORGFW however requires more care, since due to their variance reduction techniques, they require storing and using two or three different sets of parameters for the model and re-running batches with them two or even three times in an epoch. As such, the same kind of considerations apply as for their Gradient Descent equivalents, most prominently the need to keep any kind of randomness, e.g., through data augmentation and dropouts, fixed within each reference period. In the context of Gradient Descent methods, it has been suggested that these same techniques as suggested by \cite{JohnsonZhang2013} and \cite{CutkoskyOrabona2019} are not well suited to the context of Deep Learning, that is they offer little to no benefit to make up for the increase in complexity, see \cite{DefazioBottou2019}. In Section~\ref{sec:computational-results} we will computationally explore and confirm that the same reasoning applies to Frank--Wolfe algorithms.

Another important aspect is that we will treat all hyperparameters of the Frank--Wolfe algorithms, most notably the batch size, learning rate, and momentum parameter, as constant within a given run (unless a scheduler is specifically added) and to be tuned manually. Batch sizes can be chosen as is commonly done for Gradient Descent methods and momentum can likewise be set to $0.9$, that is $\rho = 0.1$. To make tuning of the learning rate easier, we have found it advantageous to at least partially decouple it from the size of the feasible region by dividing it by its $L^2$-diameter, that is Line~\ref{line:update} in Algorithm~\ref{alg:sfw} becomes
\begin{equation} \label{eq:decouple_diameter}
	\theta_{t+1} \leftarrow \theta_t + \min( \alpha   / D(\mathcal C), 1) \,  (v_t - \theta_t).
\end{equation}
This is similar to how the learning rate is commonly decoupled from the weight decay parameter for SGD~\citep{LoshchilovHutter2017}.

Another option to achieve a similar effect is to rescale the update vector $v_t - \theta_t$ to be of equal length as the gradient, that is Line~\ref{line:update} in Algorithm~\ref{alg:sfw} becomes
\begin{equation} \label{eq:gradient_rescale}
	\theta_{t+1} \leftarrow \theta_t + \min \left( \frac{  \alpha \| \tilde{\nabla} (\theta_t) \|}{ \|v_t - \theta_t\|_2}, 1 \right) (v_t - \theta_t) .
\end{equation}
Not only does this equally decouple the learning rate from the size of the particular region, but it also makes direct comparisons between Frank--Wolfe algorithms and Gradient Descent methods easier. It experimentally also seems to have a stabilizing effect on the algorithm when training very deep Neural Networks.

\subsection{Feasible regions}

In the previous sections we have expressed the constraints posed on all parameters in a given Neural Network through a single feasible region $\mathcal C$. In practice, one can probably limit oneself to the case of individual constraints placed on the parameters, or even just specific parts of those parameters, of individual layers of the network. Considering for example a simple, fully connected multilayer perceptron with $k$ layers, where the $i$-th layer $L_i$ consists of applying the operation $L_i(x) = a_i(W_ix + b_i)$ for some weight matrix $W_i$, a bias vector $b_i$ and a non-linear activation function $a_i$, one would require that $W_i$ lies in some feasible region $\mathcal C_{i,0}$ and $b_i$ in another region $\mathcal C_{i, 1}$.

For the purposes of demonstrating the general feasibility of constrained optimization of Neural Networks through the Frank--Wolfe algorithm, we have limited ourselves to uniformly applying the same type of constraint, such as a bound on the $L^p$-norm, separately on the weight and bias parameters of each layer, varying only the diameter of that region. We have found that linking the diameter of each feasible region to the expected initialization values performs well in practice, both for initializations as suggested by \cite{GlotoBengio2010} and \cite{HeZhangRenSun2015}. More specifically, if some weight vector $x \in \mathbb R^n$ is randomly initialized according to a zero-mean normal distribution with standard deviation $\sigma$, its expected $L^2$-norm is given by
\begin{equation}
	\mathbb E (\|x\|_2)	=  \frac{n \, \sigma \,  \Gamma (n/2 + 1/2)}{\sqrt{2} \, \Gamma (n/2 + 1) },
\end{equation}
and the diameter of a feasible region $\mathcal C$ would be determined by some fixed width $w > 0$ times that value, e.g., its radius $\tau$ may be chosen such that its $L^2$-diameter satisfies
\begin{equation} \label{eq:initialization_diameter}
	D(\mathcal C) = 2w\mathbb E (\|x\|_2).
\end{equation}
This adjusts the regularizing effect of the constraints to the particular diameter of the region.
 
Going forward, we believe however that using more tailor-made constraints chosen to suit the type and size of each layer can be of great interest and provide additional tools in the design and training of Neural Networks. To that extend, it should be noted that the particular choice of feasible region has a strong impact on the learned features that goes beyond the regularizing effect of constraining the features to not be `too large`, see also the computational results in the following section. This is independent of convergence guarantees such as Theorem~\ref{thm:sfw_convergence}. In particular, the solution given by the LMO impacts the update of the parameters in Algorithm~\ref{alg:sfw}.

As part of this observation, we also note that when stating LMOs in the previous section, we have implicitly assumed the existence of a unique solution. In particular, this means that the gradient was non-zero, though depending on the feasible region other gradients can also lead to non-unique solutions to Equation~\eqref{eq:LMO}. We suggest settling these cases by randomly sampling a valid solution that lies on the boundary of the feasible region. In the particular case of polytopes, one can sample randomly among all vertices that pose a valid solution. While this remark might at first glance seem not relevant in practice, as the usual combination of randomly initialized weights and Gradient Descent based optimization generally avoids scenarios that might lead to non-unique solutions to Equation~\eqref{eq:LMO}, we remark that the stochastic Frank--Wolfe algorithm is for example capable of training a zero-initialized layer that is constrained to a hypercube when the previous remarks are taken into account.

%In fact, in such cases the corresponding parameter may be chosen at random within the feasible region. One possible solution is to limit the output of the LMO to vertices of the hypercube, that is the LMO assigns each zero entry at random either $-\tau$ or $+\tau$.
%
%One note-able result of this is that  at least when of using randomness in the output of the LMO as suggested earlier. We will further explore the effects of this fact further in Section~\ref{sec:computational-results}.

% in a way that is much more impactful than in e.g., adaptive gradient based optimizers. These effects will be visualized in Section~\ref{sec:computational-results}. Here we theoretically explore them for certain regions.

%Let us state some further practical comments regarding specific regions.

\paragraph{$L^2$-norm ball.} Constraining the $L^2$-norm of weights and optimizing them using the Frank--Wolfe algorithm is most comparable, both in theory and in practice, to the well-established optimization of unconstrained parameters through SGD with an $L^2$-regularization term added to the cost function. Note that the output to the LMO is parallel to the gradient, so as long as the current iterate of the weights is not close to the boundary of the $L^2$-norm ball, the update of the SFW algorithm $\theta_{t+1} \leftarrow \theta_t+\alpha(v_t-\theta_t)$ is similar to that of SGD.

\paragraph{Hypercube.} Requiring each individual weight of a network or a layer to lie within a certain range, say in $[-\tau,\tau]$, is possibly an even more natural type of constraint. Here the update step taken by the Frank--Wolfe algorithm however differs drastically from that taken by projected SGD: in the output of the LMO each parameter receives a value of equal magnitude, so to a degree all parameters are forced to receive a non-trivial update each step.

\paragraph{$L^1$-norm ball and $K$-sparse polytopes.} On the other end of the spectrum from the dense updates forced by the LMO of the hypercube are feasible regions whose LMOs return very sparse vectors, e.g., the $L^1$-norm ball and its generalization, $K$-sparse polytopes. When for example constraining the $L^1$-norm of weights of a layer, only a single weight, that from which the most gain can be derived, will in fact increase in absolute value during the update step of the Frank--Wolfe algorithm while all other weights will decay and move towards zero. The $K$-sparse polytope generalizes that principle and increases the absolute value of the $K$ most important weights. This has the potential of resulting in very sparse weight matrices which have been of recent interest, see for example \citep{EvciGaleMenickCastroElsen2019, SrinivasSubramanyaAkshayvarunVenkatesh2017,LouizosWellingKingma2017}.
%Note that, as was the case with the hypercube, it might be advisable to settle ties at random when implementing a LMO for any of the aforementioned regions.

%

\section{Computational Results} \label{sec:computational-results}

\subsection{Comparing Frank--Wolfe algorithms}

We compare the performance of four different variants of the stochastic Frank--Wolfe algorithm introduced in Sections~\ref{sec:introduction} and~\ref{sec:preliminaries}, namely SFW both with and without momentum (the latter will be abbreviated as MSFW in the graphs) as well as SVRF and ORGFW. Note that the pseudo-codes of SFRV and ORGFW are stated in Appendix~\ref{app:further-codes} and that SPIDER-FW was omitted from the results presented here as it was not competitive on these specific tasks.

For the first comparison, we train a fully connected Neural Network with two hidden layers of size $64$ on the Fashion-MNIST dataset \citep{fashion_mnist} for $10$ epochs. The parameters of each layer are constrained in their $L^1$-norm as suggested in Section~\ref{sec:technical-considerations} and all hyperparameters are tuned individually for each of the algorithms. The results, averaged over $10$ runs, are presented in Figure~\ref{fig:fw_comparison_vr-fashion_mnist-simple}.

\begin{figure}[h]
\centerline{\includegraphics[trim=8 8 8 10, clip, width=0.5\columnwidth]{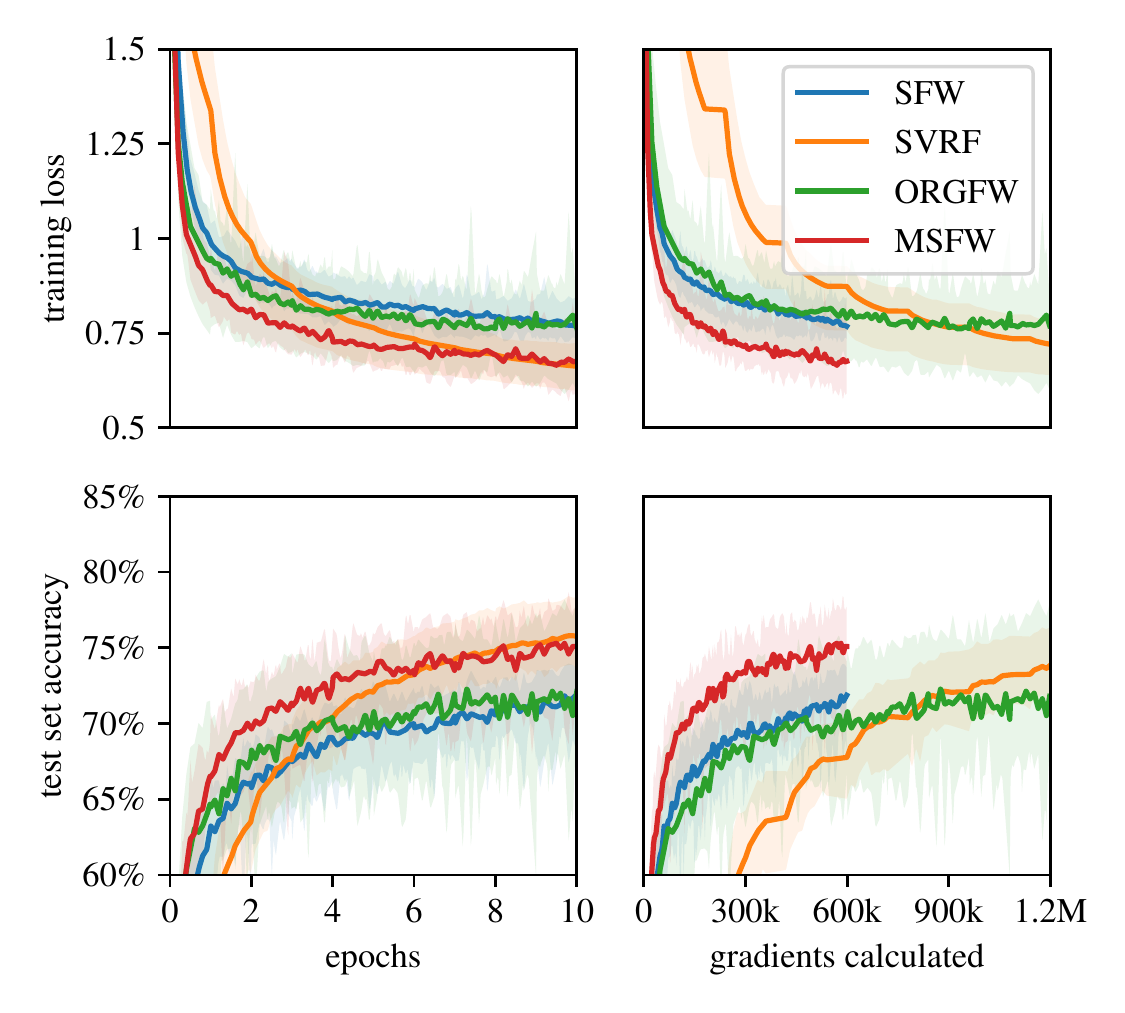}}
\caption{Comparing different stochastic Frank--Wolfe algorithms to train a fully connected Neural Network with two hidden layers on the Fashion-MNIST dataset.}
\label{fig:fw_comparison_vr-fashion_mnist-simple}
\end{figure}

For the second comparison, we train a fully-connected Neural Network with one hidden layer of size $64$ on the IMDB dataset of movie reviews \citep{imdb} for $10$ epochs. We use the \href{https://www.tensorflow.org/datasets/catalog/imdb_reviews#imdb_reviewssubwords8k}{8\,185 subword representation from TensorFlow} to generate sparse feature vectors for each datapoint. The parameters of each layer are constrained in their $L^\infty$-norm. The results, averaged over $10$ runs per algorithm, are presented in Figure~\ref{fig:fw_comparison_vr-imdb-simple}.

\begin{figure}[h]
\centerline{\includegraphics[trim=8 8 8 10, clip, width=0.5\columnwidth]{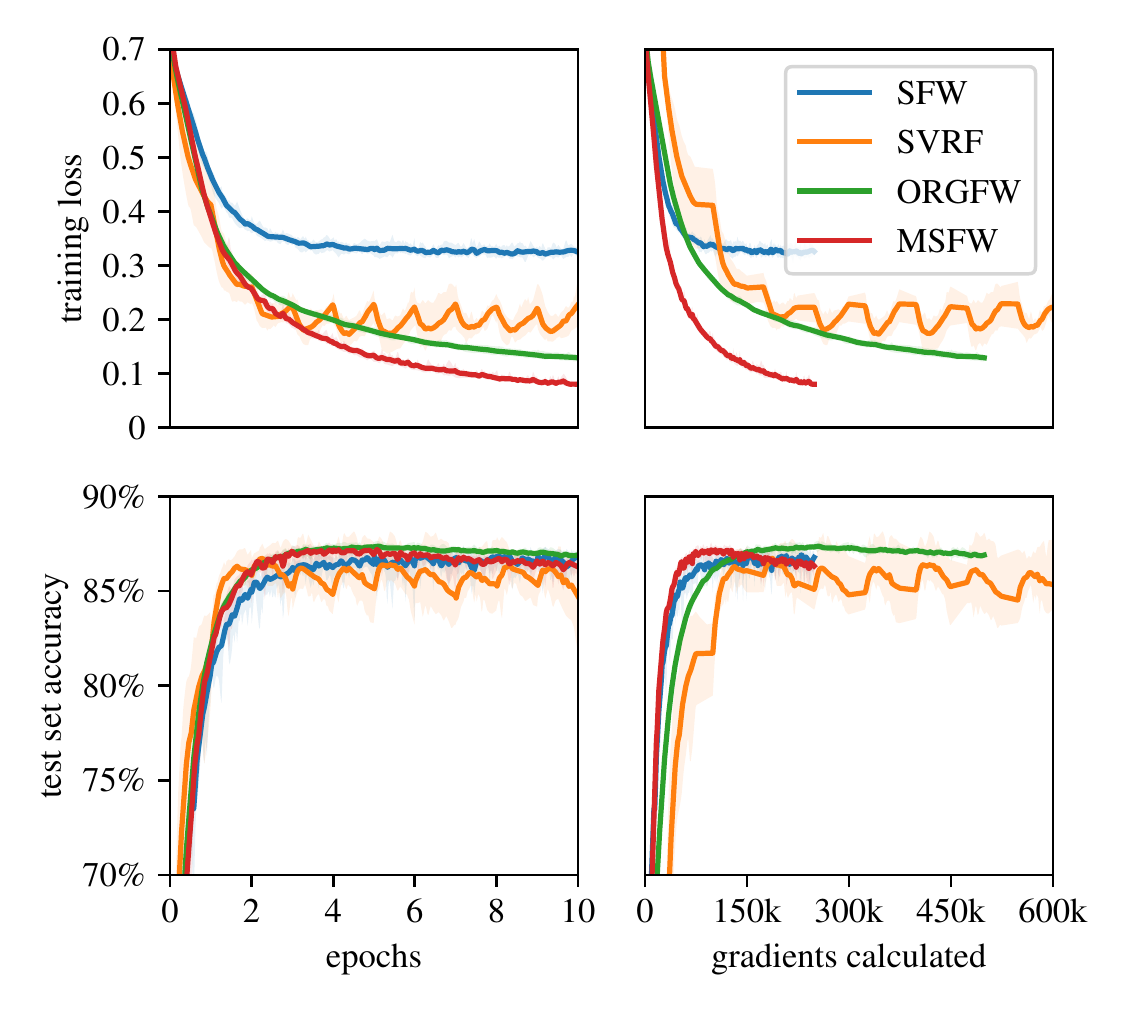}}
\caption{Comparing different stochastic Frank--Wolfe algorithms to train a fully connected Neural Network with one hidden layers on sparse feature vectors generated from the IMDB dataset of movie reviews.}
\label{fig:fw_comparison_vr-imdb-simple}
\end{figure}

Considering these two comparisons, it can be observed that in some scenarios both SVRF and ORGFW can provide an increase in performance w.r.t. SFW both on the train and the test set when considering the relevant metrics vs. the number of epochs, i.e., passes through the complete dataset. However, SFW with momentum (MSFW), a significantly easier algorithm to implement and use in actual Deep Learning applications, provides a comparable boost in epoch performance and significantly improves upon it when considering the metrics vs. the number of stochastic gradient evaluations, a more accurate metric of the involved computational effort. This is due to the variance reduction techniques used in algorithms like SVRF and ORGFW requiring multiple passes over the same datapoint with different parameters for the model within one epoch. It was previously already suggested, e.g., by \cite{WilsonRoelofsSternSrebroRecht2017}, that these techniques offer little benefit for Gradient Descent methods used to train large Neural Networks. Based on this and the computations presented here, we will focus on SFW, both with and without momentum, for the remainder of the computations in this section. For further results and complete setups, see Appendix~\ref{app:computations}.

\begin{table*}[ht]
\label{table:stoa}
\begin{center}
\begin{tabular}{lccccc}
\toprule
& \multicolumn{2}{c}{\bf CIFAR-10} &  \multicolumn{1}{c}{\bf CIFAR-100} & \multicolumn{2}{c}{\bf ImageNet} \\
& DenseNet121 & WideResNet28x10 & GoogLeNet & DenseNet121 & ResNeXt50 \\
\midrule
\bf SGD \tiny{without weight decay} & 93.14\% \tiny{\textpm 0.11} & 94.44\% \tiny{\textpm 0.12} & 76.82\% \tiny{\textpm 0.25} & 71.06\%  & 70.15\% \\
\bf SGD \tiny{with weight decay} & 94.01\% \tiny{\textpm 0.09} & \bf 95.13\% \tiny{\textpm 0.11}  & 77.50\% \tiny{\textpm 0.13} & \bf 74.89\% & \bf 76.09\%  \\
\bf SFW \tiny{with $L^2$-constraints} & \bf 94.46\% \tiny{\textpm  0.13} & 94.58\% \tiny{\textpm 0.18} & \bf 78.88\% \tiny{\textpm 0.10} & 73.46\% & 75.77\%  \\
\bf SFW \tiny{with $L^\infty$-constraints} & 94.20\% \tiny{\textpm 0.19} & 94.03\% \tiny{\textpm 0.35} & 76.54\% \tiny{\textpm 0.50} & 72.22\% & 73.95\% \\
\bottomrule
\end{tabular}
\end{center}
\caption{Test accuracy attained by several deep Neural Networks trained on the CIFAR-10, CIFAR-100 and ImageNet datasets. Parameters trained with SGD were unconstrained. Full results can be found in Appendix~\ref{app:computations}.}
\end{table*}

\subsection{Visualizing the impact of constraints} \label{sec:mnist-visulization}

We will next illustrate the impact that the choice of constraints has on the learned representations through a simple classifier trained on the MNIST dataset \citep{mnist}, which consists of $28 \times 28$ pixel grayscale images of handwritten digits. The particular network chosen here, for the sake of exposition, represents a linear regression, i.e., it has no hidden layers and no bias terms and the flattened input layer of size $784$ is fully connected to the output layer of size $10$. The weights of the network are therefore represented by a single $784 \times 10$ matrix, where each of the ten columns corresponds to the weights learned to recognize the ten digits $0$ to $9$. In Figure~\ref{fig:mnist_visualization} we present a visualization of this network trained on the dataset with different types of constraints placed on the parameters. Each image interprets one of the columns of the weight matrix as an image of size $28 \times 28$ where red represents negative weights and green represents positive weights for a given pixel. We see that the choice of feasible region, and in particular the LMO associated with it, can have a drastic impact on the representations learned by the network when using stochastic Frank--Wolfe algorithms. This is in line with the observations stated in Section~\ref{sec:technical-considerations}. For a complete visualization including other types of constraints and images, see Figures~\ref{fig:mnist_visualization_complete} and~\ref{fig:fashion_mnist_visualization_complete} in Appendix~\ref{app:computations}.

\begin{figure}[h]
\centerline{\includegraphics[trim=10 10 10 10, clip, width=0.5\columnwidth]{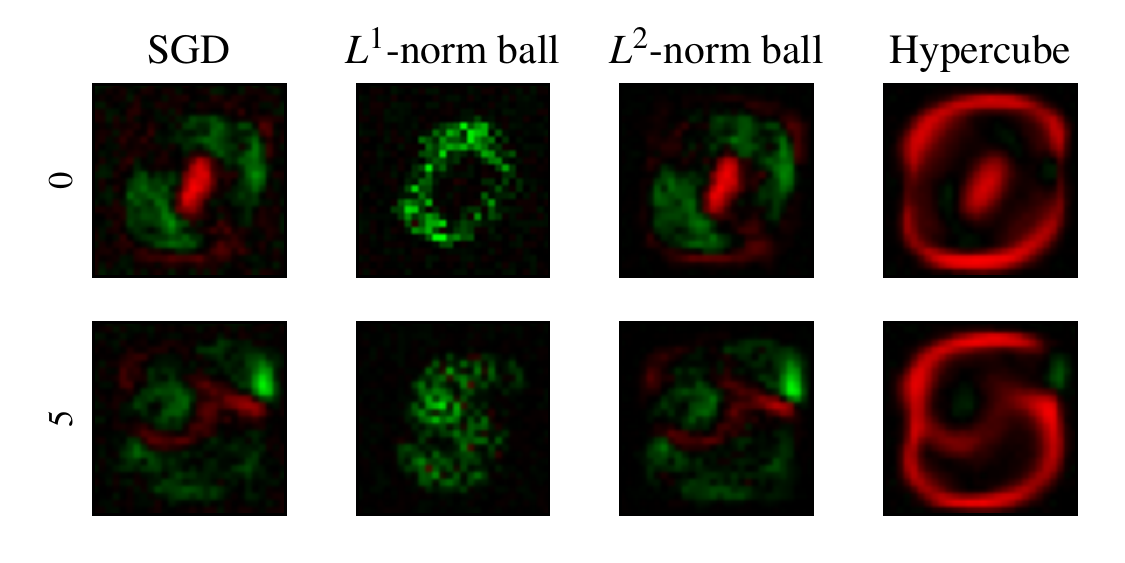}}
\caption{Visualization of the weights in a fully connected no-hidden-layer classifier trained on the MNIST dataset corresponding to the digits $0$ and $3$. Red corresponds to negative and green to positive weights. The unconstrained network in the first column was trained using SGD and the constrained networks in the remaining columns were trained using SFW.}
\label{fig:mnist_visualization}
\end{figure}

\subsection{Sparsity during training}

Further demonstrating the impact that the choice of a feasible region has on the learned representations, we consider the sparsity of the weights of trained networks. To do so, we consider the parameter of a network to be \emph{inactive} when it is smaller in absolute terms than its random initialization value. Using this notion, we can create sparse matrices from the weights of a trained network by setting all weights corresponding to inactive parameters to zero.

To study the effect of constraining the parameters, we trained two different types of networks, a fully connected network with two hidden layers with a total of $26,506$ parameters and a convolutional network with $93,322$, on the MNIST dataset. The weights of these networks were either unconstrained and updates performed through SGD, both with and without weight decay applied, or they were constrained to lie in a certain feasible region and trained using SFW. The results are shown in Figure~\ref{fig:sparseness_sparse-mnist}.

\begin{figure}[h]
\centerline{\includegraphics[trim=6 6 6 7, clip, width=0.5\columnwidth]{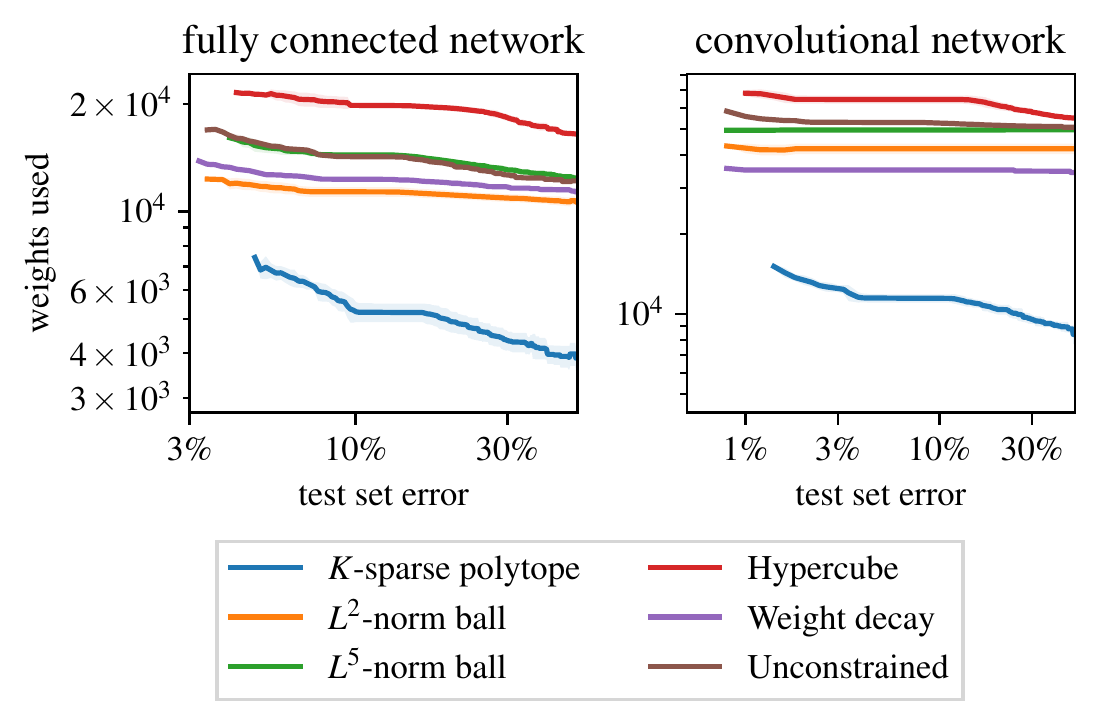}}
\caption{Number of active parameters vs. test set error in two different networks trained on the MNIST dataset. When the parameters are contrained to lie in a specified feasible region, the networks were trained using SFW with momentum. In the unconstrained case they were trained using SGD. Results are averaged over 5 runs. The full setup is in Appendix~\ref{app:computations}.}
\label{fig:sparseness_sparse-mnist}
\end{figure}

We see that regions spanned by sparse vectors, such as $K$-sparse polytopes, result in noticeably fewer active parameters in the network over the course of training, whereas regions whose LMO forces larger updates in each parameter, such as the Hypercube, result in more active weights. %Likewise, when the parameters are unconstrained, using weight decay leads to fewer active weights than standard SGD.

\subsection{Training very deep Neural Networks}

Finally, we demonstrate the feasibility of training even very deep Neural Networks using stochastic Frank--Wolfe algorithms. We have trained several state-of-the-art Neural Networks on the CIFAR-10, CIFAR-100 and ImageNet datasets~\citep{cifar, imagenet}. In Table~\ref{table:stoa} we show the top-1 test accuracy attained by networks based on the DenseNet, WideResNet, GoogLeNet and ResNeXt architecture on the test sets of these datasets. Here we compare networks with unconstrained parameters trained using SGD with momentum both with and without weight decay as well as networks whose parameters are constrained in their $L^2$-norm or $L^\infty$-norm, as laid out Section~\ref{sec:technical-considerations}, and which were trained using SFW with momentum. Both the weight decay parameter and the size of the feasible region was tuned individually and we scaled the updates of SFW to be comparable with that of SGD, again as laid out Section~\ref{sec:technical-considerations}. We can observe that, when constraining the $L^2$-norm of the parameters, SFW attains performance exceeding that of standard SGD and matching the state-of-the-art performance of SGD with weight decay. When constraining the $L^\infty$-norm of the parameters, SFW does not quite achieve the same performance as SGD with weight decay, but a regularization effect through the constraints is nevertheless clearly present, as it still exceeds the performance of SGD without weight decay. We furthermore note that, due to the nature of the LMOs associated with these particular regions, runtimes were comparable. %Full results can be found in Appendix~\ref{app:computations}. % between the constrained and unconstrained runs.

\section{Final remarks} \label{sec:final-remarks}

The primary purpose of this paper is to promote the use of constraint sets to train Neural Networks even in state-of-the-art settings. We have developed implementations of the methods presented here both in TensorFlow~\citep{tensorflow} and in PyTorch~\citep{pytorch} and have made the code publicly available on \href{https://github.com/ZIB-IOL/StochasticFrankWolfe}{github.com/ZIB-IOL/StochasticFrankWolfe} both for reproducibility and to encourage further research in this area.

%We believe that constraint sets have often times been overlooked due to costly projection steps necessary when using traditional Gradient Descent based optimizers and hope that we have illustrated the general efficacy of stochastic Frank--Wolfe methods in this context.

\subsubsection*{Acknowledgements}

Research reported in this paper was partially supported by the Research Campus MODAL funded by the German Federal Ministry of Education and Research (grant number 05M14ZAM). All computational results in this paper were tracked using the educational license of Weights \& Biases~\citep{wandb}.

%\bibliographystyle{abbrvnat}
%\bibliography{biblio}

\clearpage
\appendix
\onecolumn

\section{Proof of Theorem~\ref{thm:sfw_convergence}} \label{app:proof}

For a given probability space $\Omega$, we consider the stochastic optimization problem of the form 
	\begin{equation} \label{eq:stochastic-problem}
		\min_{\theta \in \mathcal{C}} L(\theta) = \min_{\theta \in \mathcal{C}} \mathbb{E}_{\omega \in \Omega} \ell(\theta, \omega),
	\end{equation}
where we assume that $L$ and $\ell$ are differentiable in $\theta$ but possibly non-convex and $\mathcal C$ is convex and compact. We slightly reformulate Algorithm~\ref{alg:sfw} (without momentum) for this stochastic problem formulation in Algorithm~\ref{alg:sfw-stoch}.

\begin{algorithm}[H]
\caption{Stochastic Frank--Wolfe (SFW)}
\label{alg:sfw-stoch}
\textbf{Input:} Initial parameters $\theta_0\in\mathcal{C}$, learning rate $\alpha_t \in \left[0,1\right]$, batch size $b_t$, number of steps $T$.\\
\vspace{-4mm}
\begin{algorithmic}[1]
\FOR{$t=0$ \textbf{to} $T-1$}
\STATE sample i.i.d. $\omega_1^{(t)}, \ldots, \omega_{b_t}^{(t)} \in \Omega$ \label{line:batch_sample_stoch}
\STATE $\tilde{\nabla}L(\theta_t) \leftarrow \frac{1}{b_t} \sum_{j=1}^{b_t} \nabla \ell(\theta_t, \omega_j^{(t)})$ \label{line:gradient_stoch}
\STATE$v_t \leftarrow \argmin_{v\in\mathcal{C}}\,\langle \tilde{\nabla}L(\theta_t),v\rangle$\label{line:lmo_stoch}
\STATE$\theta_{t+1} \leftarrow \theta_t+\alpha_t(v_t-\theta_t)$\label{line:update_stoch}
\ENDFOR
\end{algorithmic}
\end{algorithm}

Let us recall some definitions. We denote the globally optimal solution to Equation~\ref{eq:stochastic-problem} by $\theta^\star$ and the \emph{Frank--Wolfe Gap} is defined as
\begin{equation} \label{eq:fw-gap-2}
	\mathcal G(\theta) = \max_{v \in \mathcal C} \langle v-\theta, -\nabla L(\theta) \rangle.
\end{equation}

Let us assume that $\ell$ is $M$-smooth, that is
\begin{equation}
	\| \nabla \ell (x, \omega) - \nabla \ell (y, \omega) \| \leq M \|x - y\|,
\end{equation}
for any $\omega \in \Omega$. We note that it follows that $L$ is also $M$-smooth and furthermore
\begin{equation}
	G = \max_{x \in \mathcal C} \sup_{\omega \in \Omega} \| \nabla \ell (x, \omega) \|
\end{equation}
exists, that is $\ell$ is also $G$-Lipschitz.
%
%\emph{$G$-Lipschitz}, that is for all $x \in \mathcal C$. 
%%
%\begin{equation}
%	\| \nabla \ell (x,\cdot) \| \leq G,
%\end{equation}
%%
%and $L$ is \emph{$M$-smooth}, that is for all $x,y \in \mathcal C$
%%
%\begin{equation}
%	\| \nabla L (x) - \nabla L (y) \| \leq M \|x - y\|.
%\end{equation}
%%
A well known consequence of $L$ being $M$-smoothn is that 
\begin{equation}
	L(x) \leq L(y) + \langle \nabla L(y), x-y \rangle + \frac{M}{2}	\| x-y \|^2.
\end{equation}
Denote the $L^2$-diameter of $\mathcal C$ by $D (\mathcal C) = D$, that is $\|x - y\| \leq D$ for  all $x,y \in \mathcal C$. Let $\beta \in \mathbb{R}$ satisfy
\begin{equation}
	\beta \geq \frac{2(L(\theta_0) - L(\theta^\star))}{MD^2},
\end{equation}
for some given initialization $\theta_0 \in \mathcal C$ of the parameters. Under these assumptions, the following generalization of Theorem~\ref{thm:sfw_convergence} holds. Note that the dependency of the learning rate on $L(\theta_0) - L(\theta^\star)$ can be removed by simply setting $\beta$ equal to its lower bound.

\begin{theorem}[\cite{ReddiSraPoczosBarnabasSmola2016}] \label{thm:sfw_convergence_extended}
	Let $b_t = b = T$ and 
	\begin{equation}
		\alpha_t = \alpha = \left( \frac{L(\theta_0) - L(\theta^\star)}{TMD^2 \beta} \right)^{1/2},
	\end{equation}
	for all $0 \leq t < T$. If $\theta_a$ is chosen uniformly at random from $\{\theta_i : 0 \leq i < T \}$ as determined by Algorithm~\ref{alg:sfw-stoch} applied to Equation~\eqref{eq:stochastic-problem}, then we have 
	\begin{equation*}
		\mathbb E \, \mathcal G(\theta_a) \mathcal \leq \frac{D}{\sqrt{T}} \left( G + \left( \frac{2M (L(\theta_0) - L(\theta^\star))}{\beta} \right)^{1/2} (1+\beta) \right),
	\end{equation*}
	where $\mathbb E$ denotes the expectation w.r.t. all the randomness present.
\end{theorem}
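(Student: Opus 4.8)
The plan is to follow the standard template for non-convex Frank--Wolfe analysis: combine the descent inequality coming from $M$-smoothness with a careful accounting of the error introduced by minimizing against the \emph{noisy} gradient $\tilde\nabla L(\theta_t)$ instead of $\nabla L(\theta_t)$, then telescope, take expectations, and finally substitute the prescribed step size.

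First, since $\theta_{t+1} = \theta_t + \alpha(v_t-\theta_t)$ with $v_t,\theta_t\in\mathcal C$ and $\|v_t-\theta_t\|\le D$, the smoothness inequality gives
\[
  L(\theta_{t+1}) \le L(\theta_t) + \alpha\,\langle\nabla L(\theta_t), v_t-\theta_t\rangle + \tfrac{M\alpha^2 D^2}{2}.
\]
The genuinely delicate step is to bound $\langle\nabla L(\theta_t), v_t-\theta_t\rangle$ by (minus) the Frank--Wolfe gap plus a controllable error. Let $\hat v_t\in\argmin_{v\in\mathcal C}\langle\nabla L(\theta_t),v\rangle$, so that $\langle\nabla L(\theta_t),\hat v_t-\theta_t\rangle = -\mathcal G(\theta_t)$. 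Using that $v_t$ minimizes the perturbed linear functional $\langle\tilde\nabla L(\theta_t),\cdot\rangle$ over $\mathcal C$, one obtains
\[
  \langle\nabla L(\theta_t),v_t-\theta_t\rangle \le \langle\nabla L(\theta_t),\hat v_t-\theta_t\rangle + \langle\nabla L(\theta_t)-\tilde\nabla L(\theta_t),\,v_t-\hat v_t\rangle \le -\mathcal G(\theta_t) + D\,\|\nabla L(\theta_t)-\tilde\nabla L(\theta_t)\|,
\]
where the last step is Cauchy--Schwarz together with $\|v_t-\hat v_t\|\le D$. Substituting into the descent inequality, rearranging, summing over $t=0,\dots,T-1$ so that the $L(\theta_t)$ terms telescope, and using $L(\theta_T)\ge L(\theta^\star)$ yields
\[
  \alpha\sum_{t=0}^{T-1}\mathcal G(\theta_t) \le L(\theta_0)-L(\theta^\star) + \alpha D\sum_{t=0}^{T-1}\|\nabla L(\theta_t)-\tilde\nabla L(\theta_t)\| + \tfrac{TM\alpha^2 D^2}{2}.
\]

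Next I would take expectations. Conditioning on $\theta_t$, the batch $\omega_1^{(t)},\dots,\omega_b^{(t)}$ is independent and i.i.d., so $\tilde\nabla L(\theta_t)$ is unbiased; using $\|\nabla\ell(\cdot,\omega)\|\le G$ to bound the per-sample variance by $G^2$ gives $\mathbb E\big[\|\nabla L(\theta_t)-\tilde\nabla L(\theta_t)\|^2\,\big|\,\theta_t\big]\le G^2/b$, hence $\mathbb E\|\nabla L(\theta_t)-\tilde\nabla L(\theta_t)\|\le G/\sqrt b = G/\sqrt T$ by Jensen once $b=T$. Since $\theta_a$ is uniform on $\{\theta_0,\dots,\theta_{T-1}\}$ we have $\mathbb E\,\mathcal G(\theta_a)=\tfrac1T\sum_{t=0}^{T-1}\mathbb E\,\mathcal G(\theta_t)$, so dividing the displayed bound by $\alpha T$ produces
\[
  \mathbb E\,\mathcal G(\theta_a) \le \frac{L(\theta_0)-L(\theta^\star)}{\alpha T} + \frac{DG}{\sqrt T} + \frac{M\alpha D^2}{2}.
\]
Finally, plugging in $\alpha = \big((L(\theta_0)-L(\theta^\star))/(TMD^2\beta)\big)^{1/2}$ collapses the first and third terms into $\tfrac{D}{\sqrt T}\sqrt{M(L(\theta_0)-L(\theta^\star))}\big(\sqrt\beta + \tfrac{1}{2\sqrt\beta}\big)$, which is at most $\tfrac{D}{\sqrt T}\sqrt{2M(L(\theta_0)-L(\theta^\star))/\beta}\,(1+\beta)$; combined with the middle term this gives the claimed estimate, and the hypothesis $\beta\ge 2(L(\theta_0)-L(\theta^\star))/(MD^2)$ is exactly what guarantees $\alpha\le 1$, i.e. that the step size is admissible for Frank--Wolfe. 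The main obstacle is the second step above: cleanly absorbing the noisy-LMO error into a $D\,\|\nabla L-\tilde\nabla L\|$ term and being careful that the expectation/conditioning is organized so the per-iteration variance bound actually applies (the fresh batch at step $t$ is independent of $\theta_t$, but $\theta_t$ is itself random).
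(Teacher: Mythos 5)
Your proposal is correct and follows essentially the same route as the paper's proof: the $M$-smoothness descent step, the comparison of the noisy LMO output $v_t$ with the exact minimizer $\hat v_t$ yielding $-\mathcal G(\theta_t)$ plus a $D\,\|\nabla L(\theta_t)-\tilde\nabla L(\theta_t)\|$ error via Cauchy--Schwarz, the $G/\sqrt{b}$ bound on that error (which the paper invokes as Lemma~\ref{lemma:gradient-approximation} and you re-derive), telescoping, averaging over the uniformly random index, and substituting the prescribed $\alpha$ and $b=T$. Your closing arithmetic with $\sqrt{\beta}+\tfrac{1}{2\sqrt{\beta}}\le \sqrt{2/\beta}\,(1+\beta)$ and the observation that the lower bound on $\beta$ ensures $\alpha\le 1$ are consistent with the paper's (implicit) final step.
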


To state a proof, we will need the following well-established Lemma quantifying how closely $\tilde{\nabla} L(\theta)$ approximates $\nabla (\theta_t)$. A proof can for example be found in \cite{ReddiSraPoczosBarnabasSmola2016}.

\begin{lemma} \label{lemma:gradient-approximation}
	Let $\omega_1, \ldots, \omega_b$ be i.i.d. samples in $\Omega$, $\theta \in \mathcal C$ and $\tilde{\nabla}L(\theta) = \frac{1}{b} \sum_{j=1}^{b} \nabla \ell(\theta_t, \omega_j)$. If $\ell$ is $G$-Lipschitz, then
	\begin{equation}
		\mathbb E \|\tilde{\nabla} L (\theta) - \nabla(\theta) \|\leq \frac{G}{b^{1/2}}.
	\end{equation}
\end{lemma}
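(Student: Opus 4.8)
The plan is to reduce the statement to a variance computation via Jensen's inequality and then exploit the independence of the samples. First I would observe that, by Jensen's inequality applied to the concave square-root function, $\mathbb E\,\|X\| \leq \left(\mathbb E\,\|X\|^2\right)^{1/2}$, so it suffices to establish the bound $\mathbb E\,\|\tilde{\nabla} L(\theta) - \nabla L(\theta)\|^2 \leq G^2/b$ on the second moment.

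Next I would set $X_j \coloneqq \nabla \ell(\theta, \omega_j) - \nabla L(\theta)$ for $j \in \llbracket 1, b\rrbracket$. Since $\nabla L(\theta) = \nabla\, \mathbb E_\omega\, \ell(\theta, \omega) = \mathbb E_\omega\, \nabla \ell(\theta, \omega)$ — the interchange of gradient and expectation being justified by dominated convergence, using that $\|\nabla\ell(\cdot,\omega)\| \leq G$ uniformly in $\omega$ — each $X_j$ has mean zero, and the $X_j$ are i.i.d.\ because the $\omega_j$ are. Expanding the square of the average gives
\begin{equation*}
	\mathbb E \left\| \frac{1}{b} \sum_{j=1}^b X_j \right\|^2 = \frac{1}{b^2} \sum_{i,j=1}^b \mathbb E \langle X_i, X_j \rangle = \frac{1}{b^2} \sum_{j=1}^b \mathbb E \|X_j\|^2,
\end{equation*}
where the off-diagonal terms vanish since, for $i \neq j$, independence together with $\mathbb E X_i = \mathbb E X_j = 0$ yields $\mathbb E \langle X_i, X_j \rangle = \langle \mathbb E X_i, \mathbb E X_j \rangle = 0$.

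Finally I would bound a single summand: $\mathbb E \|X_1\|^2 = \mathbb E \|\nabla \ell(\theta, \omega_1)\|^2 - \|\nabla L(\theta)\|^2 \leq \mathbb E \|\nabla \ell(\theta, \omega_1)\|^2 \leq G^2$, using only that $\ell$ is $G$-Lipschitz. Combining this with the previous display gives $\mathbb E\,\|\tilde{\nabla} L(\theta) - \nabla L(\theta)\|^2 \leq G^2/b$, and the Jensen step from the first paragraph then delivers $\mathbb E\,\|\tilde{\nabla} L(\theta) - \nabla L(\theta)\| \leq G/b^{1/2}$. There is no genuine obstacle here beyond bookkeeping; the only point that deserves a word of care is the justification of $\mathbb E\, \nabla \ell = \nabla\, \mathbb E\, \ell$, which is standard given the uniform bound on the gradients.
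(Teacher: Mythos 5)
Your proof is correct, and it is the standard argument: the paper itself does not prove this lemma but defers to \cite{ReddiSraPoczosBarnabasSmola2016}, whose proof is exactly this reduction via Jensen's inequality to the second moment, cancellation of the cross terms by independence and zero mean, and the bound $\mathbb E\|\nabla\ell(\theta,\omega)\|^2 \leq G^2$ from the uniform Lipschitz bound. Nothing further is needed; your remark on interchanging gradient and expectation is a reasonable point of care that the cited source also treats as standard.
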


\begin{proof}[Proof of Theorem~\ref{thm:sfw_convergence_extended}]
	By $M$-smoothness of $L$ we have
	\begin{align*}
		L(\theta_{t+1}) & \leq L(\theta_t)  + \langle \nabla L(\theta_t) ,  \theta_{t+1} - \theta_t \rangle + \frac{M}{2} \| \theta_{t+1} - \theta_t \|^2.
	\end{align*}
	Using the fact that $\theta_{t+1} = \theta_{t} + \alpha (v_t - \theta_t)$ in Algorithm~\ref{alg:sfw-stoch} and that $\| v_t - \theta_t \| \leq D$, it follows that 
	\begin{equation} \label{eq:smoothness-consequence}
		L(\theta_{t+1}) \leq L(\theta_t)  + \alpha \langle \nabla L(\theta_t) ,   v_t - \theta_t \rangle + \frac{MD^2\alpha^2}{2} .
	\end{equation}
	Now let
	\begin{equation}
		\hat{v}_t = \argmin_{v \in \mathcal C} \langle \nabla L (\theta_t), v \rangle
	\end{equation}
	for any $t = 0, \ldots, T-1$. Note the difference to the definition of $v_t$ in Line~\ref{line:lmo_stoch} in Algorithm~\ref{alg:sfw-stoch} and also note that
	\begin{equation}
		\mathcal G(\theta) = \max_{v \in \mathcal C} \langle v-\theta, -\nabla L(\theta) \rangle = \langle \hat{v}_t - \theta, -\nabla L(\theta).\rangle.
	\end{equation}
	Continuing from Equation~\eqref{eq:smoothness-consequence}, we therefore have
	\begin{align*} 
		L(\theta_{t+1}) & \leq L(\theta_t)  + \alpha \langle \tilde{\nabla} L(\theta_t) ,   v_t - \theta_t \rangle  + \alpha \langle \nabla L(\theta_t) - \tilde{\nabla} L(\theta_t) ,   v_t - \theta_t \rangle + \frac{MD^2\alpha^2}{2} \\
		& \leq L(\theta_t)  + \alpha \langle \tilde{\nabla} L(\theta_t) ,   \hat{v}_t - \theta_t \rangle  + \alpha \langle \nabla L(\theta_t) - \tilde{\nabla} L(\theta_t) ,   v_t - \theta_t \rangle + \frac{MD^2\alpha^2}{2} \\
		& = L(\theta_t)  + \alpha \langle \nabla L(\theta_t) ,   \hat{v}_t - \theta_t \rangle  + \alpha \langle \nabla L(\theta_t) - \tilde{\nabla} L(\theta_t) ,   v_t - \hat{v}_t \rangle + \frac{MD^2\alpha^2}{2} \\
		& = L(\theta_t)  - \alpha \, \mathcal G(\theta_t)  + \alpha \langle \nabla L(\theta_t) - \tilde{\nabla} L(\theta_t) ,   v_t - \hat{v}_t \rangle + \frac{MD^2\alpha^2}{2}
	\end{align*}
	Applying Cauchy--Schwarz and using the fact that the diameter of $\mathcal C$ is $D$, we therefore have
	\begin{equation} 
		L(\theta_{t+1}) \leq L(\theta_t)  - \alpha \, \mathcal G(\theta_t)  + \alpha D \| \nabla L(\theta_t) - \tilde{\nabla} L(\theta_t)\| + \frac{MD^2\alpha^2}{2}.
	\end{equation}
	Taking expectations and applying Lemma~\ref{lemma:gradient-approximation}, we get
	\begin{equation} 
		\mathbb E L(\theta_{t+1}) \leq \mathbb E L(\theta_t)  - \alpha \, \mathbb E \mathcal G(\theta_t)  + \frac{GD \alpha}{b^{1/2}} + \frac{MD^2\alpha^2}{2}.
	\end{equation}
	By rearranging and summing over $t = 0, \ldots, T-1$, we get the upper bound
	\begin{align} 
		\sum_{t=0}^{T-1} \mathbb E \mathcal G(\theta_t) & \leq \frac{L(\theta_0) -\mathbb E L(\theta_T)}{\alpha} + \frac{GD}{b^{1/2}} + \frac{MD^2\alpha}{2} \nonumber \\
		& \leq \frac{L(\theta_0) -\mathbb E L(\theta^\star)}{\alpha} + \frac{GD}{b^{1/2}} + \frac{MD^2\alpha}{2}.
	\end{align}
	Using the definition of $\theta_a$ as well as the specified values for $\alpha$ and $b$ and the bound for $\beta$, we get the desired
	\begin{equation*}
		\mathbb E \, \mathcal G(\theta_a) \mathcal \leq \frac{D}{\sqrt{T}} \left( G + \left( \frac{2M (L(\theta_0) - L(\theta^\star))}{\beta} \right)^{1/2} (1+\beta) \right).
	\end{equation*}
\end{proof}

\newpage

\section{Further stochastic Frank--Wolfe algorithms}  \label{app:further-codes}

\begin{algorithm}[H]
\caption{SVRF~\citep{HazanLuo2016}}
\label{alg:svrf}
\textbf{Input:} Initial parameters $\theta_0\in\mathcal{C}$, learning rate $\alpha_{t,k} \in \left[0,1\right]$, batch size $b_{t,k} \in \llbracket1, m \rrbracket$, number of steps $T$ and $K_t$.\\
\vspace{-4mm}
\begin{algorithmic}[1]
\FOR{$t=0$ \textbf{to} $T-1$}
\STATE take snapshot $x_0 = \theta_{t}$ and compute $\nabla L ( x_0 )$
\FOR{$k=0$ \textbf{to} $K_t$}
\STATE uniformly sample i.i.d. $i_1, \ldots, i_{b_{t,k}}$ from $\llbracket 1, m \rrbracket$
\STATE $\tilde{\nabla}L(x_k) \leftarrow \nabla L ( x_0 ) + \frac{1}{b_{t,k}} \sum_{j=1}^{b_{t,k}} \left( \nabla \ell_{i_{j}} (x_k) - \ell_{i_{j}} (x_0) \right)$ 
\STATE$v_k \leftarrow \argmin_{v\in\mathcal{C}}\,\langle \tilde{\nabla}L(x_k),v\rangle$
\STATE$x_{k+1} \leftarrow x_k+\alpha_{t,k} (v_k-x_k)$
\ENDFOR
\STATE $\theta_{t+1} \leftarrow x_{K_t}$
\ENDFOR
\end{algorithmic}
\end{algorithm}

\begin{algorithm}[H]
\caption{SPIDER-FW~\citep{YurtseverSraCevher2019, ShenFangZhaoHuangQian2019}}
\label{alg:spider-fw}
\textbf{Input:} Initial parameters $\theta_0\in\mathcal{C}$, learning rate $\alpha_{t,k} \in \left[0,1\right]$, batch size $b_{t,k} \in \llbracket1, m \rrbracket$, number of steps $T$ and $K_t$.\\
\vspace{-4mm}
\begin{algorithmic}[1]
\FOR{$t=0$ \textbf{to} $T-1$}
\STATE take snapshot $x_0 = \theta_{t}$ and compute $\tilde{\nabla} L ( x_0 ) \leftarrow \nabla L ( x_0 )$
\FOR{$k=1$ \textbf{to} $K_t$}
\STATE uniformly sample i.i.d. $i_1, \ldots, i_{b_{t,k}}$ from $\llbracket 1, m \rrbracket$
\STATE $\tilde{\nabla}L(x_k) \leftarrow \tilde{\nabla}L(x_{k-1}) + \frac{1}{b_{t,k}} \sum_{j=1}^{b_{t,k}} \left( \nabla \ell_{i_{j}} (x_k) - \ell_{i_{j}} (x_{\max(k-1,0)}) \right)$ 
\STATE$v_k \leftarrow \argmin_{v\in\mathcal{C}}\,\langle \tilde{\nabla}L(x_k),v\rangle$
\STATE$x_{k+1} \leftarrow x_k+\alpha_{t,k} (v_k-x_k)$
\ENDFOR
\STATE $\theta_{t+1} \leftarrow x_{K_t}$
\ENDFOR
\end{algorithmic}
\end{algorithm}

\begin{algorithm}[H]
\caption{ORGFW~\citep{XieShenZhangQianWang2019}}
\label{alg:orgfw}
\textbf{Input:} Initial parameters $\theta_0\in\mathcal{C}$, learning rate $\alpha_t \in \left[0,1\right]$, momentum $\rho_t \in \left[0,1\right]$, batch size $b_t \in \llbracket1, m \rrbracket$, number of steps $T$.\\
\vspace{-4mm}
\begin{algorithmic}[1]
\FOR{$t=0$ \textbf{to} $T-1$}
\STATE uniformly sample i.i.d. $i_1, \ldots, i_{b_t}$ from $\llbracket 1, m \rrbracket$
%\STATE $\tilde{\nabla}L(\theta_t) \leftarrow \frac{1}{b_t} \sum_{j=1}^{b_t} \nabla \ell_{i_{j}} (\theta_t)$ \label{line:gradient}
\IF{t = 0}
\STATE $m_0 \leftarrow \frac{1}{b_t} \sum_{j=1}^{b_t} \nabla \ell_{i_{j}} (\theta_0)$ 
\ELSE
\STATE $m_t \leftarrow \frac{1}{b_t} \sum_{j=1}^{b_t} \nabla \ell_{i_{j}} (\theta_t) + (1-\rho_t) \, \left( m_{t-1} - \frac{1}{b_t} \sum_{j=1}^{b_t} \nabla \ell_{i_{j}} (\theta_{t-1}) \right)$ 
\ENDIF
\STATE$v_t \leftarrow \argmin_{v\in\mathcal{C}}\,\langle m_t,v\rangle$
\STATE$\theta_{t+1} \leftarrow \theta_t+\alpha_t(v_t-\theta_t)$
\ENDFOR
\end{algorithmic}
\end{algorithm}

\newpage

\section{Further computational results and complete setups} \label{app:computations}

\begin{figure}[h]
\centerline{\includegraphics[trim=8 8 8 8, clip, width=0.8\columnwidth]{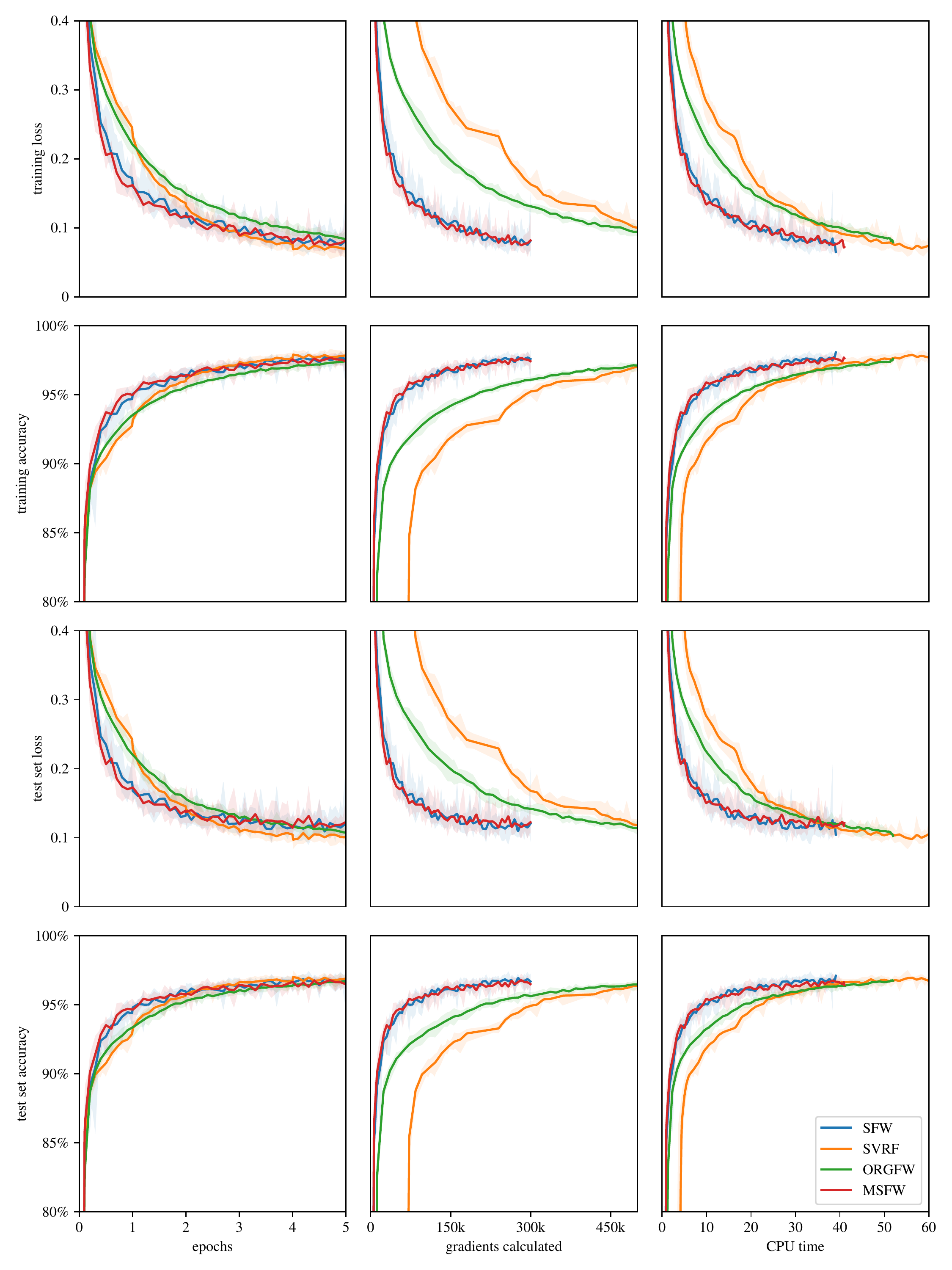}}
\caption{Comparing different stochastic Frank--Wolfe algorithms to train a fully connected Neural Network on the MNIST dataset. The fully connected Neural Network consists of two hidden layers of size 64 with ReLU activations. The $L^2$-constraints were determined according to Equation~\eqref{eq:initialization_diameter} with $w = 300$. For SFW batch size was 50 and learning rate was $0.3$ set according to Equation~\eqref{eq:decouple_diameter}. For MSFW batch size was 50, learning rate was $0.1$ set according to Equation~\eqref{eq:decouple_diameter}, and momentum was $0.9$, that is $\rho = 0.1$. For SVRF batch size was 50 and learning rate was $0.1$ set according to Equation~\eqref{eq:decouple_diameter}, and the reference point was updated at the beginning of each epoch. For ORGFW batch size was 50, learning rate was $0.03$ set according to Equation~\eqref{eq:decouple_diameter}, and momentum was $0.9$, that is $\rho = 0.1$. Results are averaged over 10 runs each.
}
\label{fig:fw_comparison_vr-mnist_extended}
\end{figure}

\begin{figure}[h]
\centerline{\includegraphics[trim=8 8 8 8, clip, width=0.8\columnwidth]{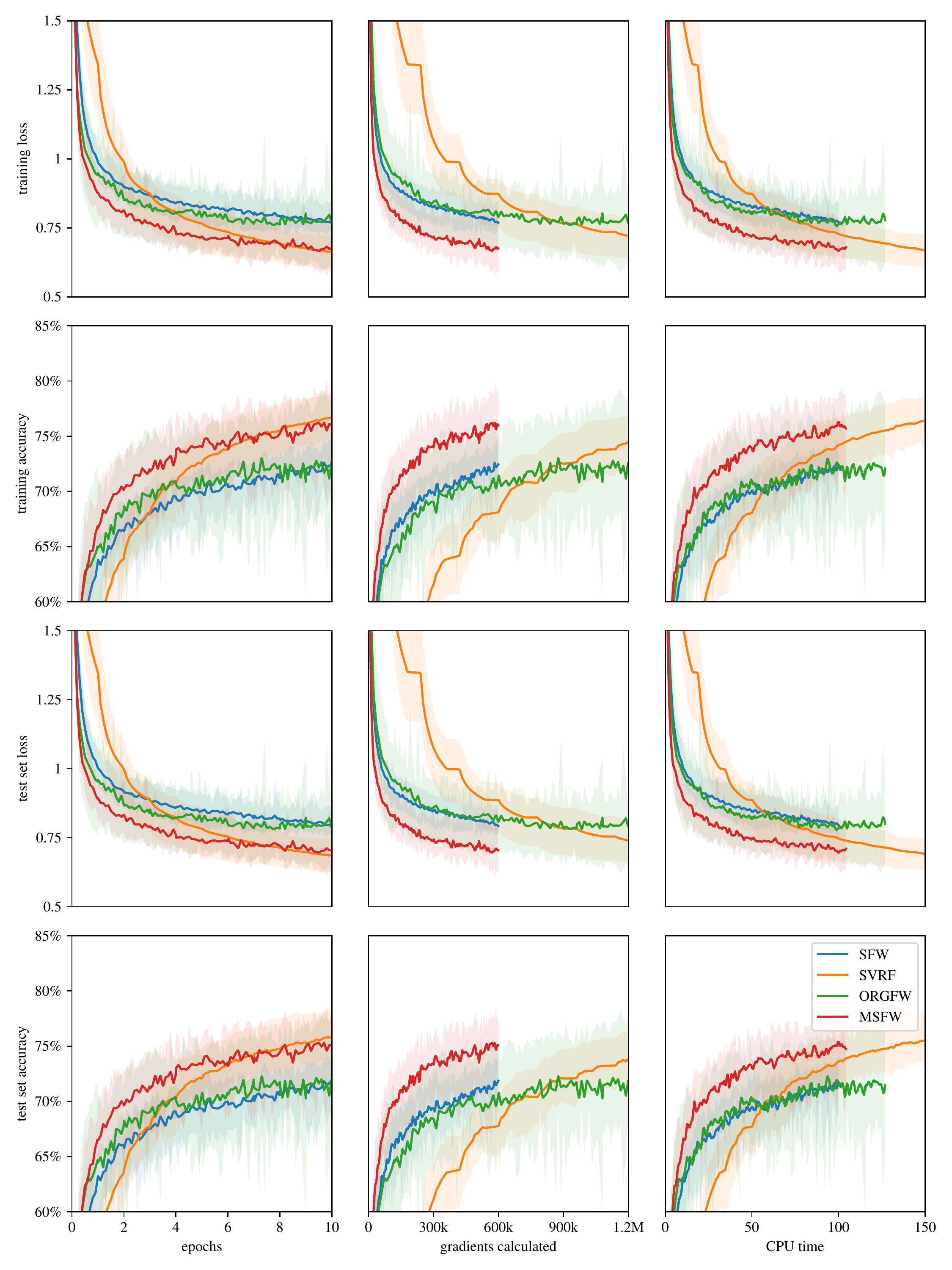}}
\caption{Comparing different stochastic Frank--Wolfe algorithms to train a fully connected Neural Network on the Fashion-MNIST dataset. The fully connected Neural Network consists of two hidden layers of size 64 with ReLU activations. The $L^1$-constraints were determined according to Equation~\eqref{eq:initialization_diameter} with $w = 1000$. For SFW batch size was 50 and learning rate was $0.1$ set according to Equation~\eqref{eq:decouple_diameter}. For MSFW batch size was 50, learning rate was $0.1$ set according to Equation~\eqref{eq:decouple_diameter}, and momentum was $0.9$, that is $\rho = 0.1$. For SVRF batch size was 50 and learning rate was $0.03$ set according to Equation~\eqref{eq:decouple_diameter}, and the reference point was updated at the beginning of each epoch. For ORGFW batch size was 50, learning rate was $0.1$ set according to Equation~\eqref{eq:decouple_diameter}, and momentum was $0.9$, that is $\rho = 0.1$. Results are averaged over 10 runs each.
}
\label{fig:fw_comparison_vr-fashion_mnist_extended}
\end{figure}

\begin{figure}[h]
\centerline{\includegraphics[trim=8 8 8 8, clip, width=0.8\columnwidth]{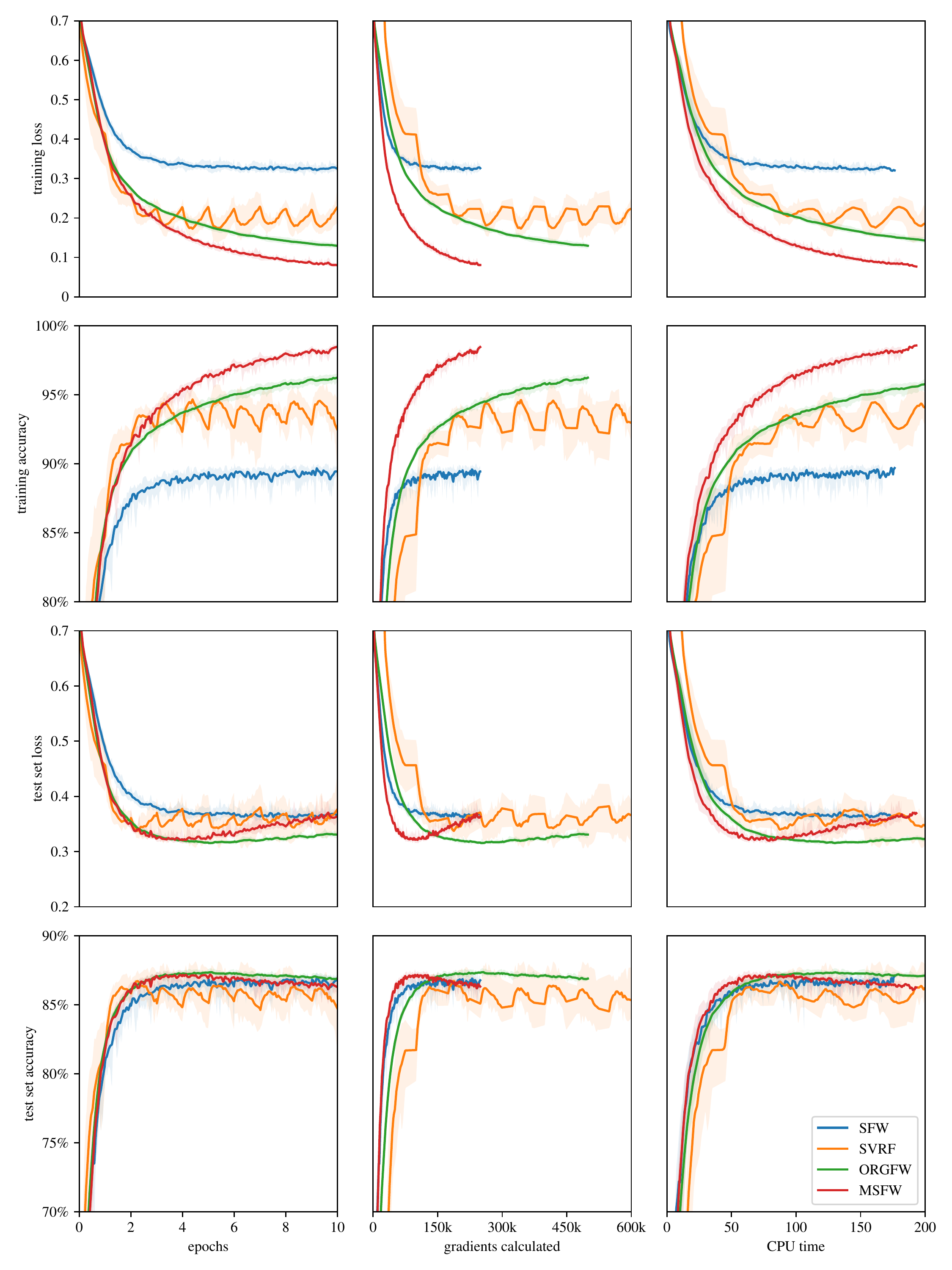}}
\caption{Comparing different stochastic Frank--Wolfe algorithms to train a fully connected Neural Network on sparse feature vectors generated from the IMDB dataset of movie reviews. The fully connected Neural Network consists of a single hidden layers of size 32 with ReLU activations. The $L^\infty$-constraints were determined according to Equation~\eqref{eq:initialization_diameter} with $w = 3$. For SFW batch size was 100 and learning rate was $0.1$ set according to Equation~\eqref{eq:decouple_diameter}. For MSFW batch size was 100, learning rate was $0.03$ set according to Equation~\eqref{eq:decouple_diameter}, and momentum was $0.9$, that is $\rho = 0.1$. For SVRF batch size was 100 and learning rate was $0.1$ set according to Equation~\eqref{eq:decouple_diameter}, and the reference point was updated at the beginning of each epoch. For ORGFW batch size was 100, learning rate was $0.03$ set according to Equation~\eqref{eq:decouple_diameter}, and momentum was $0.9$, that is $\rho = 0.1$. Results are averaged over 10 runs each.}
\label{fig:fw_comparison_vr-imdb_extended}
\end{figure}

The setup in Figure~\ref{fig:sparseness_sparse-mnist} was as follows: the fully connected Neural Network consists of two hidden layers of size 32 with ReLU activations. The convolutional Neural Network consists of a $3 \times 3$ convolutional layer with 32 channels and ReLU activations, followed by a $2 \times 2$ max pooling layer, followed by a $3 \times 3$ convolutional layer with 64 channels and ReLU activations, followed by a $2 \times 2$ max pooling layer, followed by a $3 \times 3$ convolutional layer with 64 channels and ReLU activations, followed by a fully connected layer of size 64 with ReLU activations. Both networks were always trained for 10 epochs with a batch size of 64. All $L^p$-constraints were determined according to Equation~\eqref{eq:initialization_diameter} with $w = 100$. For the $K$-sparse polytope, a radius of $300$ was used for each layer and $K$ was determined as the larger value of either $100$ or $0.1$ times the number of weights in the case of the fully connected network and the larger value of either $100$ or $0.03$ times the number of weights in the case of the convolutional network. Constrained networks were trained using SFW with momentum of $0.9$, that is $\rho = 0.1$, and a learning rate of $0.3$ using the modification suggested in Equation~\eqref{eq:gradient_rescale}. Unconstrained networks were trained using SGD with momentum of $0.9$ and a learning rate of $0.3$. When using weight decay, the parameter was set to $0.0001$. Results are averaged over 5 runs each.

\medskip

\begin{figure}[ht]
\centerline{\includegraphics[trim=8 8 8 8, clip, width=0.95\columnwidth]{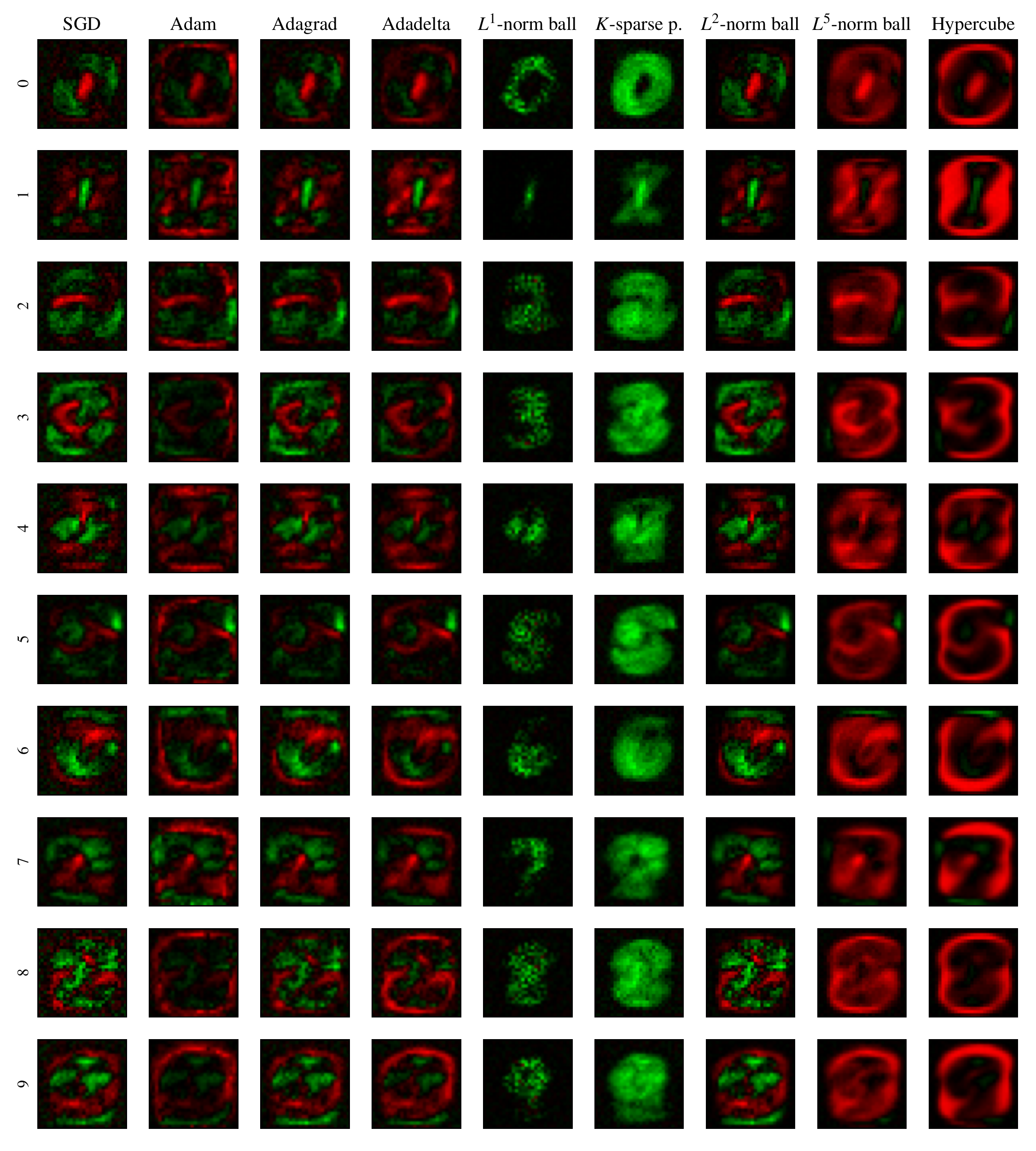}}
\caption{Visualization of the weights in a fully connected no-hidden-layer classifier trained on the MNIST dataset for 10 epochs with a batch size of 64. Red corresponds to negative and green to positive weights. The parameters of the network in the first four columns, corresponding to the SGD, Adam, Adagrad and Adadelta optimizers, were unconstrained. The networks in the remaining columns were constrained as indicated and trained using SFW with the modification in Equation~\eqref{eq:decouple_diameter}. All results are based on single runs with the corresponding settings. The hyperparameters for the constraints were $\tau = 10000$ for the $L^1$-norm ball, $\tau = 1000$ and $K = 1000$ for the $K$-sparse polytope, $\tau = 1000$ for the $L^2$-norm ball,  $\tau = 100$ for the $L^5$-norm ball and  $\tau = 1$ for the $L^\infty$-norm ball. The test set accuracies achieved were, from left to right, 92.15\%, 92.65\%, 92.42\%, 92.55\%, 86.96\%, 91.92\%, 92.17\%, 92.09\% and 90.07\%. Learning rates were from left to right, $0.1$, $0.001$, $0.1$, $1$, $0.3$, $0.3$, $0.3$, $0.1$ and $0.03$. Momentum was not used for any of the runs.}
% on3hed88, bi26lz0b, xivmt3e1, wh01nf6z, 51xauxbt, 0z3g6p58, 08lihnam, 17tmtm3k
\label{fig:mnist_visualization_complete}
\end{figure}

\begin{figure}[ht]
\centerline{\includegraphics[trim=8 8 8 8, clip, width=0.95\columnwidth]{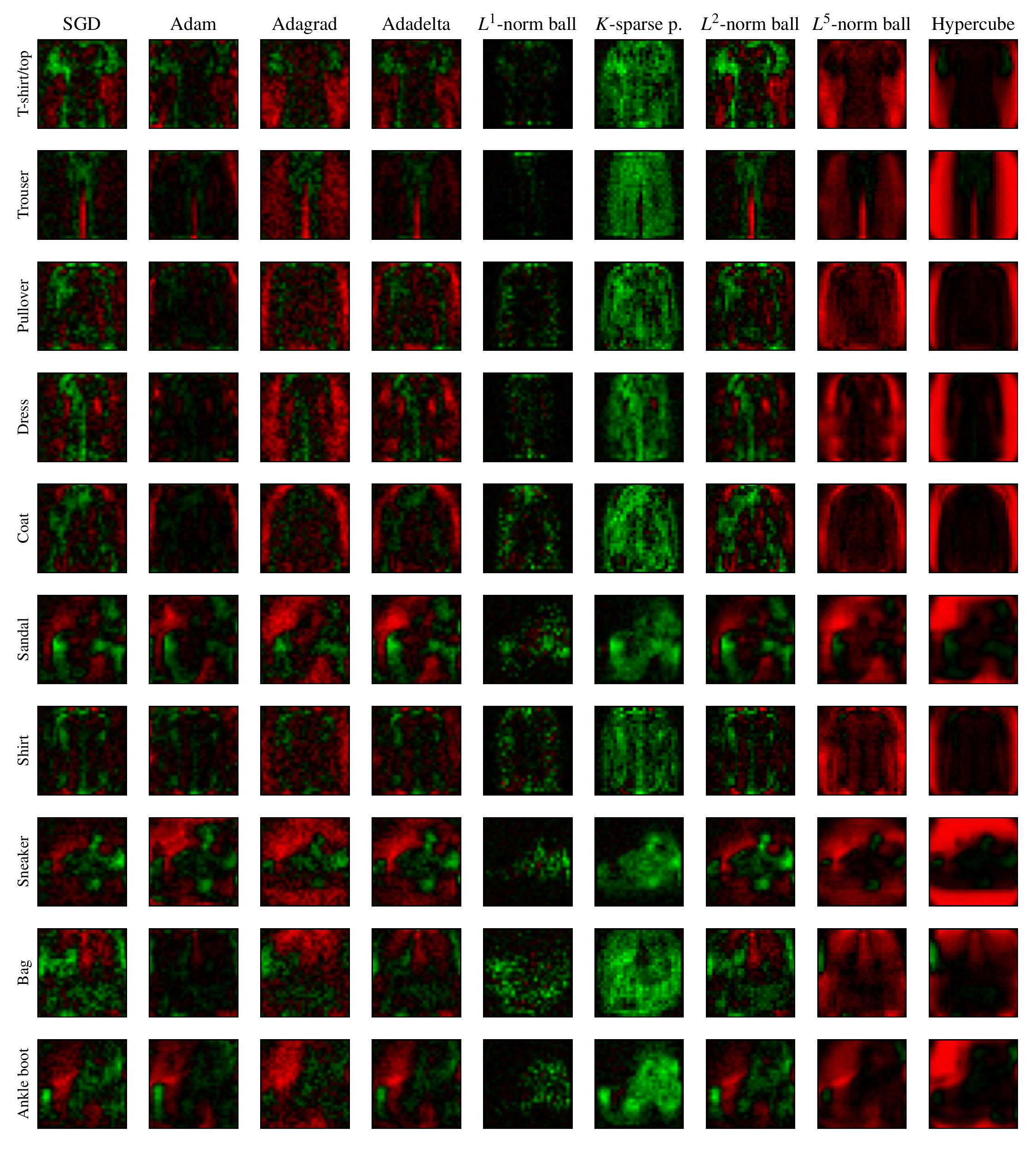}}
\caption{Visualization of the weights in a fully connected no-hidden-layer classifier trained on the Fashion-MNIST dataset for 10 epochs with a batch size of 64. Red corresponds to negative and green to positive weights. The parameters of the network in the first four columns, corresponding to the SGD, Adam, Adagrad and Adadelta optimizers, were unconstrained. The networks in the remaining columns were constrained as indicated and trained using SFW with the modification in Equation~\eqref{eq:decouple_diameter}. All results are based on single runs with the corresponding settings. The hyperparameters for the constraints were $\tau = 10000$ for the $L^1$-norm ball, $\tau = 3000$ and $K = 1000$ for the $K$-sparse polytope, $\tau = 1000$ for the $L^2$-norm ball,  $\tau = 100$ for the $L^5$-norm ball and  $\tau = 1$ for the $L^\infty$-norm ball. The test set accuracies achieved were, from left to right, 82.96\%, 84.22\%, 83.63\%, 83.46\%, 77.62\%, 82.17\%, 83.43\%, 81.97\% and 80.55\%. Learning rates were from left to right, $0.03$, $0.001$, $0.03$, $0.3$, $0.3$, $0.3$, $0.1$, $0.1$ and $0.03$. Momentum was not used for any of the runs.}
% 2zia7xbz, gj5ymcis, rqf1s4cg, 3gm0is9j, 1vwdnylj, tusbucs7, 33q3o2n4, 2onukh54, 37crnzzk
\label{fig:fashion_mnist_visualization_complete}
\end{figure}

% CIFAR10 DENSENET
\begin{table*}[ht]
\begin{center}
\begin{tabular}{lcccc}
\toprule
& \multicolumn{2}{c}{\bf training} & \multicolumn{2}{c}{\bf test} \\
& loss & accuracy & loss & accuracy \\
\midrule
\bf SGD without weight decay & \bf 0.0019 \tiny{\textpm 0.0004} & 99.94\% \tiny{\textpm 0.01} & 0.4916 \tiny{\textpm 0.0177} & 93.32\% \tiny{\textpm 0.17} \medskip \\ % pr0gnt5m, dtn1qmu6, 42h14n8q
\bf SGD with weight decay & 0.0020 \tiny{\textpm 0.0002} & \bf 99.97\% \tiny{\textpm 0.01} & 0.2848 \tiny{\textpm 0.0066} & 94.20\% \tiny{\textpm 0.19} \medskip \\ % tp09igr2, th2zfdne, dnn7v68x
\bf SFW with $L^2$-constraints & 0.0034 \tiny{\textpm 0.0003} & 99.96\% \tiny{\textpm 0.01} & \bf 0.2425 \tiny{\textpm 0.0074} & \bf 94.46\% \tiny{\textpm 0.13} \medskip \\ %4kijrlbg, gmtwpi9q, g9g1hb7b, i1zry3kf, ctoztqnl, lbkh1264, hp50ckpd
\bf SFW with $L^\infty$-constraints & 0.0028 \tiny{\textpm 0.0002} & 99.95\% \tiny{\textpm 0.01} & 0.2744 \tiny{\textpm 0.0104} & 94.20\% \tiny{\textpm 0.19} \\ %tyu4mobo, 3k1sa88o, 64d2zs1n
\bottomrule
\end{tabular}
\end{center}
\caption{DenseNet121 trained for 200 epochs on the CIFAR-10 dataset with batch size 64. SGD was used with Nesterov momentum of $0.9$. When using weight decay, the parameter was set to $0.0001$. SFW was used with momentum of $0.9$, that is $\rho = 0.1$, and the modification suggested in Equation~\eqref{eq:gradient_rescale}. $L^2$-constraints were determined according to Equation~\eqref{eq:initialization_diameter} with $w = 13$. $L^\infty$-constraints were determined according to Equation~\eqref{eq:initialization_diameter} with $w = 30$. The learning rate was initialized to $\alpha = 0.1$ and decreased by a factor of $0.1$ every 90 epochs. Additionally, it was automatically decreased by a factor of $0.7$ if the 5-epoch average of the loss was strictly greater than the 10-epoch average and increased by a factor of by roughly $1.06$ if the opposite was true. All results are based on at least three runs with the corresponding settings.}
\end{table*}

% CIFAR10 WIDERESNET
\begin{table*}[ht]
\begin{center}
\begin{tabular}{lcccc}
\toprule
& \multicolumn{2}{c}{\bf training} & \multicolumn{2}{c}{\bf test} \\
& loss & accuracy & loss & accuracy \\
\midrule
\bf SGD without weight decay & 0.0007 \tiny{\textpm 0.0001} & 99.98\% \tiny{\textpm 0.01} & 0.4446 \tiny{\textpm 0.0107} & 94.44\% \tiny{\textpm 0.12}  \medskip \\ % zkixntkh, iam2e60v, 68m7qq6p, k2jvg7b4, 7n7xa1vh
\bf SGD with weight decay & 0.0008 \tiny{\textpm 0.0001} & \bf 100.00\% \tiny{\textpm 0.00} & \bf 0.2226 \tiny{\textpm 0.0127} & \bf 95.13\% \tiny{\textpm 0.11} \medskip \\ % bkglfevt, nxyoxqix, 1a4ld6ld, 9s017uw1, rjmrhelh
\bf SFW with $L^2$-constraints & 0.0010 \tiny{\textpm 0.0001} & 99.99\% \tiny{\textpm 0.01} & 0.2843 \tiny{\textpm 0.0101} & 94.58\% \tiny{\textpm 0.18} \medskip \\ % i203ovwv, xhjpgr59, lol2xx5p, 6pm0b5oi, 1d3bj2yx
\bf SFW with $L^\infty$-constraints & \bf 0.0006 \tiny{\textpm 0.0001} & 99.99\% \tiny{\textpm 0.00} & 0.5246 \tiny{\textpm 0.0798} & 94.03\% \tiny{\textpm 0.35} \\ % tezqiuua, lycum1nm, tax7kqdx, 06468gga, hqpdsd3n
\bottomrule
\end{tabular}
\end{center}
\caption{WideResNet28x10 trained for 200 epochs on the CIFAR-10 dataset with batch size 128. SGD was used with Nesterov momentum of $0.9$. When using weight decay, the parameter was set to $0.0001$. SFW was used with momentum of $0.9$, that is $\rho = 0.1$, and the modification suggested in Equation~\eqref{eq:gradient_rescale}. $L^2$-constraints were determined according to Equation~\eqref{eq:initialization_diameter} with $w = 30$. $L^\infty$-constraints were determined according to Equation~\eqref{eq:initialization_diameter} with $w = 100$. The learning rate was initialized to $\alpha = 0.1$ and decreased by a factor of $0.2$ at epoch 60, 120 and 160. Additionally, it was automatically decreased by a factor of $0.7$ if the 5-epoch average of the loss was strictly greater than the 10-epoch average and increased by a factor of by roughly $1.06$ if the opposite was true. All results are based on five runs with the corresponding settings.}
\end{table*}

% CIFAR100 GOOGLENET
\begin{table*}[ht]
\begin{center}
\begin{tabular}{lcccc}
\toprule
& \multicolumn{2}{c}{\bf training} & \multicolumn{2}{c}{\bf test} \\
& loss & accuracy & loss & accuracy \\
\midrule
\bf SGD without weight decay & \bf 0.0094 \tiny{\textpm 0.0012} & 99.71\% \tiny{\textpm 0.04} & 1.9856 \tiny{\textpm 0.0278} & 76.82\% \tiny{\textpm 0.25} \medskip \\ % 2vpzxjyb, 6t95wylw, ejomdclp
\bf SGD with weight decay &  0.0113 \tiny{\textpm 0.0007} & 99.85\% \tiny{\textpm 0.02} & 1.0385 \tiny{\textpm 0.0126} & 77.50\% \tiny{\textpm 0.13}  \medskip \\ % f31lhnhx, fc5pvour, gznnf456
\bf SFW with $L^2$-constraints & 0.0152 \tiny{\textpm 0.0011} & \bf 99.94\% \tiny{\textpm 0.01} & \bf 0.8188 \tiny{\textpm 0.0039} & \bf 78.88\% \tiny{\textpm 0.10}  \medskip \\ % 0220x2gw, 72udtbza, 0kjnyumo
\bf SFW with $L^\infty$-constraints & 0.0131 \tiny{\textpm 0.0015} & 99.82\% \tiny{\textpm 0.02} & 1.0462 \tiny{\textpm 0.0140} & 76.54\% \tiny{\textpm 0.50} \\ % q0hfrvrt, 6simlizm, duw5zfr7
\bottomrule
\end{tabular}
\end{center}
\caption{GoogLeNet trained for 200 epochs on the CIFAR-100 dataset with batch size 128. SGD was used with Nesterov momentum of $0.9$. When using weight decay, the parameter was set to $0.0001$. SFW was used with momentum of $0.9$, that is $\rho = 0.1$, and the modification suggested in Equation~\eqref{eq:gradient_rescale}. $L^2$-constraints were determined according to Equation~\eqref{eq:initialization_diameter} with $w = 10$. $L^\infty$-constraints were determined according to Equation~\eqref{eq:initialization_diameter} with $w = 30$. The learning rate was initialized to $\alpha = 0.1$ and decreased by a factor of $0.2$ at epoch 60, 120 and 160. Additionally, it was automatically decreased by a factor of $0.7$ if the 5-epoch average of the loss was strictly greater than the 10-epoch average and increased by a factor of by roughly $1.06$ if the opposite was true. All results are based on three runs with the corresponding settings.}
\end{table*}

% IMAGENET DENSENET
\begin{table*}[ht]
\begin{center}
\begin{tabular}{lcccccc}
\toprule
& \multicolumn{3}{c}{\bf training} & \multicolumn{3}{c}{\bf test} \\
& loss & accuracy & top-5 accuracy & loss & accuracy & top-5 accuracy \\
\midrule
\bf SGD without weight decay & \bf 0.9409 & \bf 76.95\% & \bf 91.45\% & 1.2945 & 71.06\% & 89.77\%  \medskip \\ % 4wdim2oy
\bf SGD with weight decay & 1.0107 & 75.57\% & 91.10\% & \bf 0.9957 & \bf 74.89\% & \bf 92.30\%  \medskip \\ % eo3pj9yv
\bf SFW with $L^2$-constraints & 1.1931 & 71.85\% & 89.04\% & 1.0542 & 73.46\% & 91.47\% \medskip \\ % ioudz9cm
\bf SFW with $L^\infty$-constraints & 1.1871 & 71.64\% & 88.89\% & 1.1111 & 72.22\% & 90.72\% \\ % 6t6jcpey
\bottomrule
\end{tabular}
\end{center}
\caption{
	Results of a DenseNet121 network trained for 90 epochs on the ImageNet dataset with batch size 256. SGD was used with Nesterov momentum of $0.9$. When using weight decay, the parameter was set to $0.0001$. SFW was used with momentum of $0.9$, that is $\rho = 0.1$, and the modification suggested in Equation~\eqref{eq:gradient_rescale}. $L^2$-constraints were determined according to Equation~\eqref{eq:initialization_diameter} with $w = 30$. $L^\infty$-constraints were determined according to Equation~\eqref{eq:initialization_diameter} with $w = 100$. The learning rate was initialized to $\alpha = 0.1$ and decreased by a factor of $0.1$ every 30 epochs. Additionally, it was automatically decreased by a factor of $0.7$ if the 5-epoch average of the loss was strictly greater than the 10-epoch average and increased by a factor of by roughly $1.06$ if the opposite was true.
} 
\label{table:imagenet-densenet}
\end{table*}

% IMAGENET RESNEXT
\begin{table*}[ht]
\begin{center}
\begin{tabular}{lcccccc}
\toprule
& \multicolumn{3}{c}{\bf training} & \multicolumn{3}{c}{\bf test} \\
& loss & accuracy & top-5 accuracy & loss & accuracy & top-5 accuracy \\
\midrule
\bf SGD without weight decay & \bf 0.5382 & \bf 86.77\% & \bf 95.31\% & 1.6467 & 70.15\% & 88.79\%  \medskip \\ % 63wmsn6b
\bf SGD with weight decay & 0.6889 & 83.16\% & 94.37\% & 0.9810 & \bf 76.09\% & \bf 92.98\%  \medskip \\ % kgtcjemx
\bf SFW with $L^2$-constraints &  0.9257 & 78.12\% & 92.18\% & \bf 0.9598 & 75.77\% & 92.64\%  \medskip \\ % 9sq3cgff
\bf SFW with $L^\infty$-constraints & 0.9476 & 76.92\% & 91.69\% & 1.057 & 73.95\% & 91.44\% \\ % ubfejydc
\bottomrule
\end{tabular}
\end{center}
\caption{
	Results of a ResNeXt50 network trained for 90 epochs on the ImageNet dataset with batch size 256. SGD was used with Nesterov momentum of $0.9$. When using weight decay, the parameter was set to $0.0001$. SFW was used with momentum of $0.9$, that is $\rho = 0.1$, and the modification suggested in Equation~\eqref{eq:gradient_rescale}. $L^2$-constraints were determined according to Equation~\eqref{eq:initialization_diameter} with $w = 30$. $L^\infty$-constraints were determined according to Equation~\eqref{eq:initialization_diameter} with $w = 100$. The learning rate was initialized to $\alpha = 0.1$ and decreased by a factor of $0.1$ every 30 epochs. Additionally, it was automatically decreased by a factor of $0.7$ if the 5-epoch average of the loss was strictly greater than the 10-epoch average and increased by a factor of by roughly $1.06$ if the opposite was true.
} 
\label{table:imagenet-resnext}
\end{table*}

\end{document}